\def\eqref#1{equation~\ref{#1}}
\def\1{\bm{1}}
\DeclareMathAlphabet{\mathsfit}{\encodingdefault}{\sfdefault}{m}{sl}
\SetMathAlphabet{\mathsfit}{bold}{\encodingdefault}{\sfdefault}{bx}{n}
\newcommand{\Var}{\mathrm{Var}}
\theoremstyle{plain}
\theoremstyle{definition}
\title{Single Episode Policy Transfer in Reinforcement Learning}
\author{Jiachen Yang \\
Georgia Institute of Technology, USA \\
\texttt{jiachen.yang@gatech.edu}
\And
Brenden Petersen \\
Lawrence Livermore National Laboratory, USA \\
\texttt{petersen33@llnl.gov}
\And
Hongyuan Zha \\
Georgia Institute of Technology, USA \\
\texttt{zha@cc.gatech.edu}
\And
Daniel Faissol \\
Lawrence Livermore National Laboratory, USA \\
\texttt{faissol1@llnl.gov} \\
}
\begin{document}

\maketitle

\begin{abstract}
Transfer and adaptation to new unknown environmental dynamics is a key challenge for reinforcement learning (RL).
An even greater challenge is performing near-optimally in a single attempt at test time, possibly without access to dense rewards, which is not addressed by current methods that require multiple experience rollouts for adaptation.
To achieve single episode transfer in a family of environments with related dynamics, we propose a general algorithm that optimizes a probe and an inference model to rapidly estimate underlying latent variables of test dynamics, which are then immediately used as input to a universal control policy.
This modular approach enables integration of state-of-the-art algorithms for variational inference or RL.
Moreover, our approach does not require access to rewards at test time, allowing it to perform in settings where existing adaptive approaches cannot.
In diverse experimental domains with a single episode test constraint, our method significantly outperforms existing adaptive approaches and shows favorable performance against baselines for robust transfer.
\end{abstract}

\section{Introduction}
One salient feature of human intelligence is the ability to perform well in a single attempt at a new task instance, by recognizing critical characteristics of the instance and immediately executing appropriate behavior based on experience in similar instances.
Artificial agents must do likewise in applications where success must be achieved in one attempt and failure is irreversible.
This problem setting, \textit{single episode transfer}, imposes a challenging constraint in which an agent experiences---and is evaluated on---only \textit{one} episode of a test instance.

As a motivating example, a key challenge in precision medicine is the uniqueness of each patient's response to therapeutics
\citep{hodson2016precision,bordbar2015personalized,whirl2012pharmacogenomics}.
Adaptive therapy is a promising approach that formulates a treatment strategy as a sequential decision-making problem \citep{zhang2017integrating,West476507,petersen2019deep}.
However, heterogeneity among instances may require explicitly accounting for factors that underlie individual patient dynamics. 
For example, in the case of adaptive therapy for sepsis \citep{petersen2019deep}, predicting patient response prior to treatment is not possible. However, differences in patient responses can be observed via blood measurements very early after the onset of treatment \citep{Cockrell2018responders}.

As a first step to address \textit{single episode transfer} in reinforcement learning (RL), we propose a general algorithm for near-optimal test-time performance in a family of environments where differences in dynamics can be ascertained early during an episode.
Our key idea is to train an inference model and a probe that together achieve rapid inference of latent variables---which account for variation in a family of similar dynamical systems---using a small fraction (e.g., 5\%) of the test episode, then deploy a universal policy conditioned on the estimated parameters for near-optimal control on the new instance.
Our approach combines the advantages of robust transfer and adaptation-based transfer, as we learn a single universal policy that requires no further training during test, but which is adapted to the new environment by conditioning on an unsupervised estimation of new latent dynamics.

In contrast to methods that quickly adapt or train policies via gradients during test but assume access to multiple test rollouts and/or dense rewards \citep{finn2017model,killian2017robust,rakelly2019efficient}, we explicitly optimize for performance in one test episode without accessing the reward function at test time.
Hence our method applies to real-world settings in which rewards during test are highly delayed or even completely inaccessible---e.g., a reward that depends on physiological factors that are accessible only in simulation and not from real patients.
We also consider computation time a crucial factor for real-time application, whereas some existing approaches require considerable computation during test \citep{killian2017robust}. Our algorithm builds on variational inference and RL as submodules, which ensures practical compatibility with existing RL workflows.

Our main contribution is a simple general algorithm for single episode transfer in families of environments with varying dynamics, via rapid inference of latent variables and immediate execution of a universal policy.
Our method attains significantly higher cumulative rewards, with orders of magnitude faster computation time during test, than the state-of-the-art model-based method \citep{killian2017robust}, on benchmark high-dimensional domains whose dynamics are discontinuous and continuous in latent parameters.
We also show superior performance over optimization-based meta-learning and favorable performance versus baselines for robust transfer.

\vspace{-3pt}
\section{Single episode transfer in RL: problem setup}

Our goal is to train a model that performs close to optimal within a single episode of a test instance with new unknown dynamics. 
We formalize the problem as a family $( \Scal, \Acal, \Tcal, R, \gamma )$, where $( \Scal, \Acal, R, \gamma)$ are the state space, action space, reward function, and discount of an episodic Markov decision process (MDP).
Each \textit{instance} of the family is a stationary MDP with transition function $\Tcal_z(s'|s,a) \in \Tcal$.
When a set $\Zcal$ of physical parameters determines transition dynamics \citep{konidaris2014hidden}, each $\Tcal_z$ has a hidden parameter $z \in \Zcal$ that is sampled once from a distribution $P_{\Zcal}$ and held constant for that instance.
For more general stochastic systems whose modes of behavior are not easily attributed to physical parameters, $\Zcal$ is induced by a generative latent variable model that indirectly associates each $\Tcal_z$ to a latent variable $z$ learned from observed trajectory data.
We refer to ``latent variable'' for both cases, with the clear ontological difference understood.
Depending on application, $\Tcal_z$ can be continuous or discontinuous in $z$.
We strictly enforce the challenging constraint that latent variables are never observed, in contrast to methods that use known values during training \citep{Yu-RSS-17},
to ensure the framework applies to challenging cases without prior knowledge.


This formulation captures a diverse set of important problems.
Latent space $\Zcal$ has physical meaning in systems where $\Tcal_z$ is a continuous function of physical parameters (e.g., friction and stiffness) with unknown values.
In contrast, a discrete set $\Zcal$ can induce qualitatively different dynamics, such as a 2D navigation task where $z \in \lbrace 0,1 \rbrace$ decides if the same action moves in either a cardinal direction or its opposite \citep{killian2017robust}.
Such drastic impact of latent variables may arise when a single drug is effective for some patients but causes serious side effects for others \citep{Cockrell2018responders}.

\textbf{Training phase.} 
Our training approach is fully compatible with
RL for episodic environments.
We sample many instances, either via a simulator with controllable change of instances or using off-policy batch data in which demarcation of instances---but not values of latent variables---is known, and train for one or more episodes on each instance.
While we focus on the case with known change of instances, the rare case of unknown demarcation can be approached either by preprocessing steps such as clustering trajectory data or using a dynamic variant of our algorithm (\Cref{app:dynasept}).

\textbf{Single test episode.}
In contrast to prior work that depend on the luxury of multiple experience rollouts for adaptation during test time \citep{doshi2016hidden,killian2017robust,finn2017model,rakelly2019efficient}, we introduce the strict constraint that the trained model has access to---and is evaluated on---\textit{only one} episode of a new test instance.
This reflects the need to perform near-optimally as soon as possible in critical applications such as precision medicine, where an episode for a new patient with new physiological dynamics is the entirety of hospitalization.

\section{Single episode policy transfer}

We present Single Episode Policy Transfer (SEPT), a high-level algorithm for single episode transfer between MDPs with different dynamics.
The following sections discuss specific design choices in SEPT, all of which are combined in synergy for near-optimal performance in a single test episode.

\subsection{Policy transfer through latent space}
\label{subsec:transfer}

Our best theories of natural and engineered systems involve physical constants and design parameters that enter into dynamical models.
This physicalist viewpoint motivates a partition for transfer learning in families of MDPs:
1. learn a representation of latent variables with an inference model that rapidly encodes a vector $\hat{z}$ of discriminative features for a new instance; 2. train a universal policy $\pi(a|s,z)$ to perform near-optimally for dynamics corresponding to any latent variable in $\Zcal$; 3. immediately deploy both the inference model and universal policy on a given test episode.
To build on the generality of model-free RL, and for scalability to systems with complex dynamics, we do not expend computational effort to learn a model of $\Tcal_z(s'|s,a)$, in contrast to model-based approaches \citep{killian2017robust,yao2018direct}.
Instead, we leverage expressive variational inference models to represent latent variables and provide uncertainty quantification.

In domains with ground truth hidden parameters, a latent variable encoding is the most succinct representation of differences in dynamics between instances.
As the encoding $\zhat$ is held constant for all episodes of an instance, a universal policy $\pi(a|s,z)$ can either adapt to all instances when $\Zcal$ is finite, or interpolate between instances when $\Tcal_z$ is continuous in $z$ \citep{schaul2015universal}.
Estimating a discriminative encoding for a new instance enables immediate deployment of $\pi(a|s,z)$ on the single test episode, bypassing the need for further fine-tuning.
This is critical for applications where further training complex models on a test instance is not permitted due to safety concerns.
In contrast, methods that do not explicitly estimate a latent representation of varied dynamics must use precious experiences in the test episode to tune the trained policy \citep{finn2017model}.

In the training phase, we generate an optimized\footnote{In the sense of machine teaching, as explained fully in \Cref{subsec:probe}}
dataset $\Dcal := \lbrace \tau^i \rbrace_{i=1}^N$ of short trajectories, where each $\tau^i := (s^i_1,a^i_1,\dotsc,s^i_{T_p},a^i_{T_p})$
is a sequence of early state-action pairs at the start of episodes of instance $\Tcal_i \in \Tcal$ (e.g. $T_p=5$).
We train a variational auto-encoder, comprising an approximate posterior inference model $q_{\phi}(z|\tau)$ that produces a latent encoding $\hat{z}$ from $\tau$ and a parameterized generative model $p_{\psi}(\tau|z)$.
The dimension chosen for $\hat{z}$ may differ from the exact true dimension when it exists but is unknown; domain knowledge can aid the choice of dimensionality reduction.
Because dynamics of a large variety of natural systems are determined by independent parameters (e.g., coefficient of contact friction and Reynolds number can vary independently),
we consider a disentangled latent representation where latent units capture the effects of independent generative parameters.
To this end, we bring $\beta$-VAE \citep{higgins2017beta} into the context of families of dynamical systems, choosing an isotropic unit Gaussian as the prior and imposing the constraint $D_{KL}(q_{\phi}(z|\tau^i)\Vert p(z)) < \epsilon$.
The $\beta$-VAE is trained by maximizing the variational lower bound $\Lcal(\psi,\phi;\tau^i)$ for each $\tau^i$ across $\Dcal$:
\begin{align}\label{eq:variational-lowerbound}
    \max_{\psi,\phi} \log p_{\psi}(\tau^i) \geq \Lcal(\psi,\phi;\tau^i) := -\beta D_{KL}(q_{\phi}(z|\tau^i) \Vert p(z)) + \Ebb_{q_{\phi}(z|\tau^i)} \bigl[ \log p_{\psi}(\tau^i|z) \bigr]
\end{align}
This subsumes the VAE \citep{kingma2013auto} as a special case ($\beta=1$), and we refer to both as VAE in the following.
Since latent variables only serve to differentiate among trajectories that arise from different transition functions, the meaning of latent variables is not affected by isometries and hence the value of $\zhat$ by itself need not have any simple relation to a physically meaningful $z$ even when one exists.
Only the partition of latent space is important for training a universal policy.

Earlier methods for a family of similar dynamics
relied on Bayesian neural network (BNN) approximations of the entire transition function $s_{t+1} \sim \hat{\Tcal}^{(\textrm{BNN})}_z(s_t,a_t)$, which was either used to perform computationally expensive fictional rollouts during test time \citep{killian2017robust} or used indirectly to further optimize a posterior over $z$ \citep{yao2018direct}.
Our use of variational inference is more economical: the encoder $q_{\phi}(z|\tau)$ can be used immediately to infer latent variables during test, while the decoder $p_{\psi}(\tau|z)$ plays a crucial role for optimized probing in our algorithm (see \Cref{subsec:probe}).

In systems with ground truth hidden parameters, we desire two additional properties.
The encoder should produce low-variance encodings, which we implement by minimizing the entropy of $q_{\phi}(z|\tau)$:
\begin{align}\label{eq:entropy-encoder}
    \min_{\phi} H(q_{\phi}(z|\tau)) &:= - \int_z q_{\phi}(z|\tau) \log q_{\phi}(z|\tau) dz = \frac{D}{2} \log(2\pi) + \frac{1}{2}\sum_{d=1}^D \bigl( 1 + \log \sigma_d^2 \bigr)
\end{align}
under a diagonal Gaussian parameterization, where $\sigma_d^2 = \Var(q_{\phi}(z|\tau))$ and $\dim(z) = D$.
We add $-H(q_{\phi}(z|\tau))$ as a regularizer to \eqref{eq:variational-lowerbound}.
Second, we must capture the impact of $z$ on higher-order dynamics.
While previous work neglects the order of transitions $(s_t,a_t,s_{t+1})$ in a trajectory \citep{rakelly2019efficient}, we note that a single transition may be compatible with multiple instances whose differences manifest only at higher orders.
In general, partitioning the latent space requires taking the ordering of a temporally-extended trajectory into account.
Therefore, we parameterize our encoder $q_{\phi}(z|\tau)$ using a bidirectional LSTM---as both temporal directions of $(s_t,a_t)$ pairs are informative---and we use an LSTM decoder $p_{\psi}(\tau|z)$ (architecture in \Cref{app:architecture}).
In contrast to embedding trajectories from a \textit{single} MDP for hierarchical learning \citep{co2018self}, our purpose is to encode trajectories from \textit{different instances} of transition dynamics for optimal control.

\subsection{Transfer of a universal policy}

We train a single universal policy $\pi(a|s,z)$ and deploy the same policy during test (without further optimization), for two reasons: robustness against imperfection in latent variable representation and significant improvement in scalability.
Earlier methods trained multiple optimal policies $\lbrace \pi^*_i(a|s) \rbrace_{i=1}^N$ on training instances with a set $\lbrace z^i \rbrace_{i=1}^N$ of hidden parameters, then employed either behavioral cloning \citep{yao2018direct} or off-policy Q-learning \citep{arnekvist2019vpe} to train a final policy $\pi(a|s,z)$ using a dataset $\lbrace (s_t, \hat{z}^i; a_t\sim \pi^*_i(a|s_t)) \rbrace$.
However, this supervised training scheme may not be robust \citep{Yu-RSS-17}: if $\pi(a|s,z)$ were trained only using instance-specific optimal state-action pairs generated by $\pi^*_i(a|s)$ and posterior samples of $\hat{z}$ from an optimal inference model,
it may not generalize well when faced with states and encodings that were not present during training.
Moreover, it is computationally infeasible to train a collection $\lbrace \pi^*_i \rbrace_{i=1}^N$---which is thrown away during test---when faced with a large set of training instances from a continuous set $\Zcal$.
Instead, we interleave training of the VAE and a single policy $\pi(a|s,z)$, benefiting from considerable computation savings at training time, and higher robustness due to larger effective sample count.

\subsection{Optimized probing for accelerated latent variable inference}
\label{subsec:probe}


To execute near-optimal control within a single test episode, we first rapidly compute $\hat{z}$ using a short trajectory of initial experience.
This is loosely analogous to the use of preliminary medical treatment to define subsequent prescriptions that better match a patient's unique physiological response.
Our goal of rapid inference
motivates two algorithmic design choices to optimize this initial phase.
First, the trajectory $\tau$ used for inference by $q_{\phi}(z|\tau)$ must be optimized, in the sense of machine teaching \citep{zhu2018overview}, as certain trajectories are more suitable than others for inferring latent variables that underlie system dynamics.
If specific degrees of freedom are impacted the most by latent variables, an agent should probe exactly those dimensions to produce an informative trajectory for inference.
Conversely, methods that deploy a single universal policy without an initial probing phase \citep{yao2018direct} can fail in adversarial cases, such as when the initial placeholder $\zhat$ used in $\pi_{\theta}(a|s,\cdot)$ at the start of an instance causes failure to exercise dimensions of dynamics that are necessary for inference.
Second, the VAE must be specifically trained on a dataset $\Dcal$ of short trajectories consisting of initial steps of each training episode.
We cannot expend a long trajectory for input to the encoder during test, to ensure enough remaining steps for control.
Hence, single episode transfer motivates the machine teaching problem of learning to distinguish among dynamics: our algorithm must have learned both to generate and to use a short initial trajectory to estimate a representation of dynamics for control.

Our key idea of optimized probing for accelerated latent variable inference is to train a dedicated probe policy $\pi_{\varphi}(a|s)$ to generate a dataset $\Dcal$ of short trajectories at the beginning of all training episodes, such that the VAE's performance on $\Dcal$ is optimized\footnote{In general, $\Dcal$ is not related to the replay buffer commonly used in off-policy RL algorithms.}.
Orthogonal to training a meta-policy for faster exploration \textit{during} standard RL training \citep{xu2018learning}, 
our probe and VAE are trained for the purpose of performing well on a \textit{new} test MDP.
For ease of exposition, we discuss the case with access to a simulator, but our method easily allows use of off-policy batch data.
We start each training episode using $\pi_{\varphi}$ for a \textit{probe phase} lasting $T_p$ steps, record the probe trajectory $\tau_p$ into $\Dcal$, train the VAE using minibatches from $\Dcal$, then use $\tau_p$ with the encoder to generate $\hat{z}$ for use by $\pi_{\theta}(a|s,\hat{z})$ to complete the remainder of the episode (\Cref{alg:sept}).
At test time, SEPT only requires lines 5, 8, and 9 in \Cref{alg:sept} (training step in 9 removed; see \Cref{alg:sept-test}).
The reward function for $\pi_{\varphi}$ is defined as the VAE objective, approximated by the variational lower bound (\ref{eq:variational-lowerbound}): $R_p(\tau) := \Lcal(\psi,\phi;\tau) \leq \log p_{\psi}(\tau)$.
This feedback loop between the probe and VAE directly trains the probe to help the VAE's inference of latent variables that distinguish different dynamics (\Cref{fig:architecture}).
We provide detailed justification as follows.
First we state a result derived in \Cref{app:proof-exploration}:
\begin{restatable}{proposition}{propexploration}
\label{prop:exploration-gradient}
Let $p_{\varphi}(\tau)$ denote the distribution of trajectories induced by $\pi_{\varphi}$.
Then the gradient of the entropy $H(p_{\varphi}(\tau))$ is given by
\begin{align}\label{eq:exploration-gradient}
    \nabla_{\varphi} H(p_{\varphi}(\tau)) &= \Ebb_{p_{\varphi}(\tau)} \bigl[ \nabla_{\varphi} \sum_{i=0}^{T_p-1} \log(\pi_{\varphi}(a_i|s_i)) (-\log p_{\varphi}(\tau)) \bigr]
\end{align}
\end{restatable}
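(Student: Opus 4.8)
The plan is a short direct computation: treat $-\log p_{\varphi}(\tau)$ as a ``reward'' and apply the log-derivative (REINFORCE) identity, while being careful that this reward itself depends on $\varphi$. First I would write $H(p_{\varphi}(\tau)) = -\int p_{\varphi}(\tau)\log p_{\varphi}(\tau)\,d\tau$ and differentiate under the integral sign, using the product rule:
\begin{align*}
\nabla_{\varphi} H(p_{\varphi}(\tau)) = -\int \bigl(\nabla_{\varphi} p_{\varphi}(\tau)\bigr) \log p_{\varphi}(\tau)\,d\tau \;-\; \int p_{\varphi}(\tau)\,\nabla_{\varphi}\log p_{\varphi}(\tau)\,d\tau.
\end{align*}
The second integral equals $\int \nabla_{\varphi} p_{\varphi}(\tau)\,d\tau = \nabla_{\varphi}\!\int p_{\varphi}(\tau)\,d\tau = \nabla_{\varphi} 1 = 0$ (the score function has zero mean), so it drops out. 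This is also precisely why the final formula carries no term from differentiating the reward $-\log p_{\varphi}(\tau)$ itself: that contribution is exactly the vanishing integral above.

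For the surviving term I would substitute $\nabla_{\varphi} p_{\varphi}(\tau) = p_{\varphi}(\tau)\,\nabla_{\varphi}\log p_{\varphi}(\tau)$, rewriting it as $\Ebb_{p_{\varphi}(\tau)}\bigl[\,\nabla_{\varphi}\log p_{\varphi}(\tau)\,\bigl(-\log p_{\varphi}(\tau)\bigr)\,\bigr]$. Then I would expand the trajectory likelihood via its Markov factorization over the $T_p$ probe steps, $p_{\varphi}(\tau) = p(s_0)\prod_{i=0}^{T_p-1}\pi_{\varphi}(a_i|s_i)\,\Tcal(s_{i+1}|s_i,a_i)$, where $p(s_0)$ is the initial-state distribution and $\Tcal$ the (instance-dependent) transition kernel; if $p_{\varphi}(\tau)$ additionally averages over the instance prior $P_{\Zcal}$, that factor is likewise independent of $\varphi$. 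Taking $\log$ and then $\nabla_{\varphi}$ annihilates every factor except the policy terms, leaving $\nabla_{\varphi}\log p_{\varphi}(\tau) = \nabla_{\varphi}\sum_{i=0}^{T_p-1}\log \pi_{\varphi}(a_i|s_i)$. Substituting this back into the expectation yields exactly \eqref{eq:exploration-gradient}.

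The computation is otherwise routine; the one point genuinely requiring care is the interchange of $\nabla_{\varphi}$ with the integral over trajectories in the first step, which I would justify---as is standard for policy-gradient results---by assuming $\pi_{\varphi}$ is differentiable in $\varphi$ with derivatives dominated by an integrable function, so that dominated convergence applies. No other obstacle arises, since $\tau$ ranges over the finite horizon $T_p$ and the factorization is exact; the ``conceptual'' content is simply the observation in the first paragraph that the reward-derivative term vanishes, giving the clean score-function form.
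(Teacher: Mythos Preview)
Your proposal is correct and follows essentially the same route as the paper: differentiate $H(p_{\varphi}(\tau))$ under the integral, observe that the term $\int p_{\varphi}\nabla_{\varphi}\log p_{\varphi}\,d\tau = \nabla_{\varphi}\int p_{\varphi}\,d\tau$ vanishes, apply the log-derivative trick to the surviving term, and then use the Markov factorization of $p_{\varphi}(\tau)$ so that only the $\sum_i \log\pi_{\varphi}(a_i|s_i)$ terms survive the gradient. The paper simply states ``assuming regularity'' where you spell out the dominated-convergence justification, but otherwise the arguments coincide.
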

Noting that dataset $\Dcal$ follows distribution $p_{\varphi}$
and that the VAE is exactly trained to maximize the log probability of $\Dcal$,
we use $\Lcal(\psi,\phi;\tau)$ as a tractable lowerbound on $\log p_{\varphi}(\tau)$.
Crucially, to generate optimal probe trajectories for the VAE, we take a minimum-entropy viewpoint and \textit{descend} the gradient (\ref{eq:exploration-gradient}).
This is opposite of a maximum entropy viewpoint that encourages the policy to generate diverse trajectories \citep{co2018self}, which would minimize $\log p_{\varphi}(\tau)$ and produce an adversarial dataset for the VAE---hence, optimal probing is not equivalent to diverse exploration.
The degenerate case of $\pi_{\varphi}$ learning to ``stay still'' for minimum entropy is precluded by any source of environmental stochasticity: trajectories from different instances will still differ, so degenerate trajectories result in low VAE performance.
Finally we observe that \eqref{eq:exploration-gradient} is the defining equation of a simple policy gradient algorithm \citep{williams1992simple} for training $\pi_{\varphi}$, with $\log p_{\varphi}(\tau)$ interpreted as the cumulative reward of a trajectory generated by $\pi_{\varphi}$.
This completes our justification for defining reward $R_p(\tau) := \Lcal(\psi,\phi;\tau)$.
We also show empirically in ablation experiments that this reward is more effective than choices that encourage high perturbation of state dimensions or high entropy (\Cref{sec:results}).
\begin{wrapfigure}{r}{0.55\textwidth}
  \centering
    \includegraphics[width=0.55\textwidth]{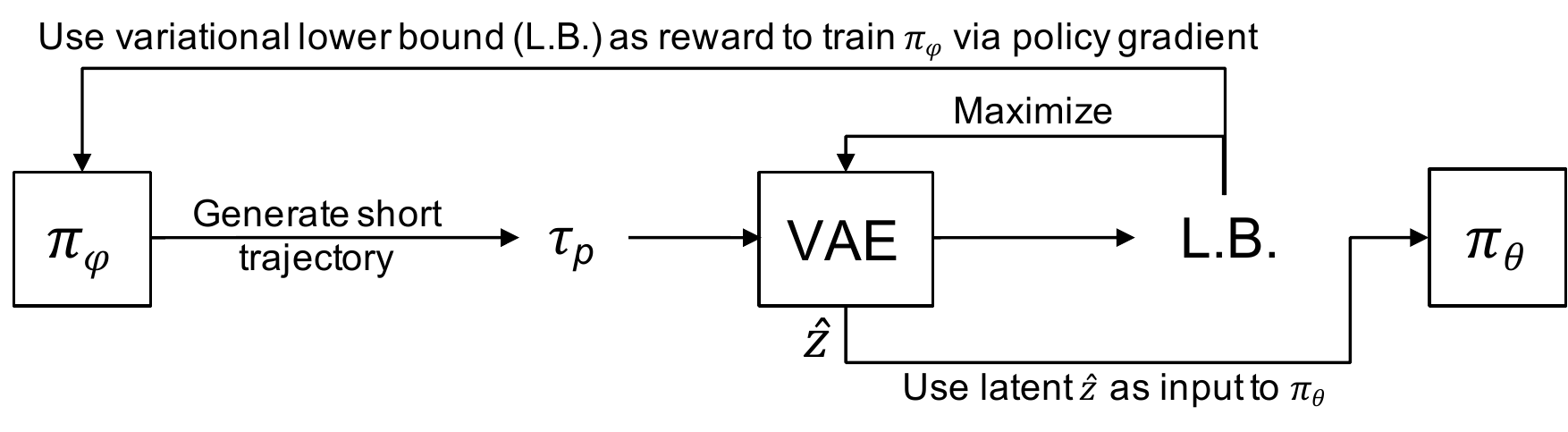}
  \caption{$\pi_{\varphi}$ learns to generate an optimal dataset for the VAE, whose performance is the reward for $\pi_{\varphi}$. Encoding $\zhat$ by the VAE is given to control policy $\pi_{\theta}$.}
  \label{fig:architecture}
\end{wrapfigure}

The VAE objective function may not perfectly evaluate a probe trajectory generated by $\pi_{\varphi}$ because the objective value increases due to VAE training regardless of $\pi_{\varphi}$.
To give a more stable reward signal to $\pi_{\varphi}$, we can use a second VAE whose parameters slowly track the main VAE according to $\psi' \leftarrow \alpha \psi + (1-\alpha) \psi'$ for $\alpha \in [0,1]$, and similarly for $\phi'$.
While analogous to target networks in DQN \citep{mnih2015human}, the difference is that our second VAE is used to compute the \textit{reward} for $\pi_{\varphi}$.

\begin{algorithm}[t]
\caption{Single Episode Policy Transfer: training phase}
\label{alg:sept}
\begin{algorithmic}[1]
\footnotesize
\Procedure{SEPT-train}{}
\State Initialize encoder $\phi$, decoder $\psi$, probe policy $\varphi$, control policy $\theta$, and trajectory buffer $\Dcal$
\For{each instance $\Tcal_z$ with transition function sampled from $\Tcal$}
    \For{each episode on instance $\Tcal_z$}
        \State Execute $\pi_{\varphi}$ for $T_p$ steps and store trajectory $\tau_p$ into $\Dcal$
        \State Use variational lower bound (\ref{eq:variational-lowerbound}) as the reward to train $\pi_{\varphi}$ by descending gradient (\ref{eq:exploration-gradient})
        \State Train VAE using minibatches from $\Dcal$  for gradient ascent on (\ref{eq:variational-lowerbound}) and descent on (\ref{eq:entropy-encoder})
        \State Estimate $\hat{z}$ from $\tau_p$ using encoder $q_{\phi}(z|\tau)$
        \State Execute $\pi_{\theta}(a|s,z)$ with $\hat{z}$ for remaining time steps and train it with suitable RL algorithm
\EndFor
\EndFor
\EndProcedure
\end{algorithmic}
\end{algorithm}

\section{Related work}

Transfer learning in a family of MDPs with different dynamics manifests in various formulations \citep{taylor2009transfer}.
Analysis of $\epsilon$-stationary MDPs and $\epsilon$-MDPs
provide theoretical grounding by showing that an RL algorithm that learns an optimal policy in an MDP can also learn a near-optimal policy for multiple transition functions \citep{kalmar1998module,szita2002varepsilon}.

Imposing more structure, the hidden-parameter Markov decision process (HiP-MDP) formalism posits a space of hidden parameters that determine transition dynamics, and implements transfer by model-based policy training after inference of latent parameters \citep{doshi2016hidden,konidaris2014hidden}. 
Our work considers HiP-MDP as a widely applicable yet special case of a general viewpoint, in which the existence of hidden parameters is not assumed but rather is induced by a latent variable inference model.
The key structural difference from POMDPs \citep{kaelbling1998planning} is that given fixed latent values, each instance from the family is an MDP with no hidden states; hence, unlike in POMDPs, tracking a history of observations provides no benefit.
In contrast to multi-task learning \citep{caruana1997multitask}, which uses the same tasks for training and test, and in contrast to parameterized-skill learning \citep{da2012learning}, where an agent learns from a collection of rewards with given task identities in one environment with fixed dynamics,
our training and test MDPs have different dynamics and identities of instances are not given.

Prior latent variable based methods for transfer in RL depend on a multitude of optimal policies during training \citep{arnekvist2019vpe},
or learn a surrogate transition model for model predictive control with real-time posterior updates during test \citep{perez2018efficient}.
Our variational model-free approach does not incur either of these high computational costs.
We encode trajectories to infer latent representation of differing dynamics, in contrast to state encodings in \citep{zhang2018decoupling}.
Rather than formulating variational inference in the space of optimal value functions \citep{tirinzoni2018transfer}, 
we implement transfer through variational inference in a latent space that underlies dynamics.
Previous work for transfer across dynamics with hidden parameters employ model-based RL with Gaussian process and Bayesian neural network (BNN) models of the transition function \citep{doshi2016hidden,killian2017robust}, which require computationally expensive fictional rollouts to train a policy from scratch during test time and poses difficulties for real-time test deployment.
DPT uses a fully-trained BNN to further optimize latent variable during a single test episode, but faces scalability issues as it needs one optimal policy per training instance \citep{yao2018direct}.
In contrast, our method does not need a transition function and can be deployed without optimization during test.
Methods for robust transfer either require access to multiple rounds from the test MDP during training \citep{rajeswaran2016epopt}, or require the distribution over hidden variables to be known or controllable \citep{paul2019fingerprint}.
While meta-learning \citep{finn2017model,rusu2018meta,zintgraf2018fast,rakelly2019efficient} in principle can take one gradient step during a single test episode, prior empirical evaluation were not made with this constraint enforced, and adaptation during test is impossible in settings without dense rewards.

\section{Experimental setup}
\label{sec:experimental-setup}

We conducted experiments on three benchmark domains with diverse challenges to evaluate the performance, speed of reward attainment, and computational time of SEPT versus five baselines in the single test episode\footnote{Code for all experiments is available at \url{https://github.com/011235813/SEPT}.}.
We evaluated four ablation and variants of SEPT to investigate the necessity of all algorithmic design choices.
For each method on each domain, we conducted 20 independent training runs.
For each trained model, we evaluate on $M$ independent test instances, all starting with the same model; adaptations during the single test episode, if done by any method, are not preserved across the independent test instances.
This means we evaluate on a total of $20M$ independent test instances per method per domain.
Hyperparameters were adjusted using a coarse coordinate search on validation performance.
We used DDQN with prioritized replay \citep{van2016deep,schaul2015prioritized} as the base RL component of all methods for a fair evaluation of transfer performance; other RL algorithms can be readily substituted.


\textbf{Domains. }
We use the same continuous state discrete action HiP-MDPs proposed by \citet{killian2017robust} for benchmarking.
Each isolated instance from each domain is solvable by RL, but it is highly challenging, if not impossible, for na\"ive RL to perform optimally for all instances because significantly different dynamics require different optimal policies.
In \textbf{2D navigation}, dynamics are discontinuous in $z \in \lbrace 0, 1\rbrace$ as follows: location of barrier to goal region, flipped effect of actions (i.e., depending on $z$, the \textit{same} action moves in either a cardinal direction or its opposite), and direction of a nonlinear wind.
In \textbf{Acrobot} \citep{sutton1998reinforcement}, the agent applies $\lbrace +1,0,-1\rbrace$ torques to swing a two-link pendulum above a certain height.
Dynamics are determined by a vector $z = (m_1,m_2,l_1,l_2)$ of masses and lengths, centered at 1.0.
We use four unique instances in training and validation, constructed by sampling $\Delta z$ uniformly from $\lbrace -0.3,-0.1,0.1,0.3 \rbrace$ and adding it to all components of $z$.
During test, we sample $\Delta z$ from $\lbrace -0.35,-0.2,0.2,0.35 \rbrace$ to evaluate both interpolation and extrapolation.
In \textbf{HIV}, a patient's state dynamics is modeled by differential equations with high sensitivity to 12 hidden variables and separate steady-state regions of health, such that different patients require unique treatment policies \citep{adams2004dynamic}.
Four actions determine binary activation of two drugs.
We have $M = 10,5,5$ for 2D navigation, Acrobot, and HIV, respectively.

\textbf{Baselines. }
First, we evaluated two simple baselines that establish approximate bounds on test performance of methods that train a single policy:
as a lower bound, \textbf{Avg} trains a single policy $\pi(a|s)$ on all instances sampled during training and runs directly on test instances; 
as an upper bound in the limit of perfect function approximation for methods that use latent variables as input, \textbf{Oracle} $\pi(a|s,z)$ receives the true hidden parameter $z$ during both training and test.
Next we adapted existing methods, detailed in \Cref{app:algorithms}, to single episode test evaluation:
1. we allow the model-based method \textbf{BNN} \citep{killian2017robust} to fine-tune a pre-trained BNN and train a policy using BNN-generated fictional episodes every 10 steps during the test episode; 2. we adapted the adversarial part of EPOpt \citep{rajeswaran2016epopt}, which we term \textbf{EPOpt-adv}, by training a policy $\pi(a|s)$ on instances with the lowest 10-percentile performance; 3.
we evaluate \textbf{MAML} as an archetype of meta-learning methods that require dense rewards or multiple rollouts \citep{finn2017model}.
We allow MAML to use a trajectory of the same length as SEPT's probe trajectory for one gradient step during test.
We used the same architecture for the RL module of all methods (\Cref{app:architecture}).
To our knowledge, these model-free baselines are evaluated on single-episode transfer for the first time in this work.

\textbf{Ablations. }
To investigate the benefit of our optimized probing method for accelerated inference, we designed an ablation called \textbf{SEPT-NP},
in which trajectories generated by the control policy are used by the encoder for inference and stored into $\Dcal$ to train the VAE.
Second, we investigated an alternative reward function for the probe, labeled \textbf{TotalVar} and defined as $R(\tau) := 1/T_p \sum_{t=1}^{T_p-1} \sum_{i=1}^{\text{dim}(\Scal)} |s_{t+1,i} - s_{t,i}|$ for probe trajectory $\tau$.
In contrast to the minimum entropy viewpoint in \Cref{subsec:probe}, this reward encourages generation of trajectories that maximize total variation across all state space dimensions.
Third, we tested the maximum entropy viewpoint on probe trajectory generation, labeled \textbf{MaxEnt}, by giving \textit{negative} lowerbound as the probe reward: $R_p(\tau) := -\Lcal(\psi,\phi;\tau)$.
Last, we tested whether \textbf{DynaSEPT}, an extension that dynamically decides to probe or execute control (\Cref{app:dynasept}), has any benefit for stationary dynamics.

\begin{figure}[t]
\centering
\begin{subfigure}[t]{0.20\linewidth}
    \centering
    \includegraphics[width=1.0\linewidth]{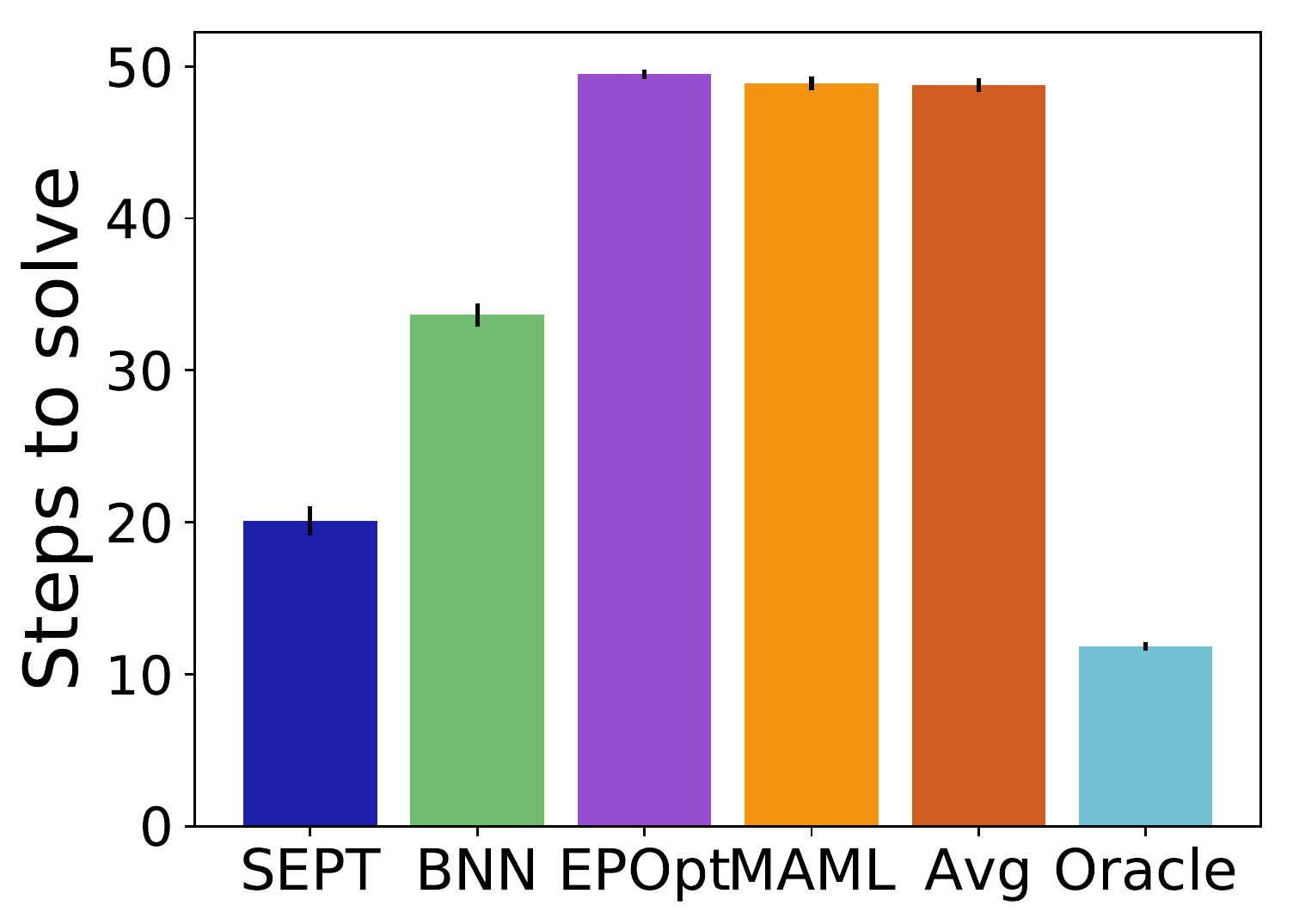}
    \caption{2D navigation}
    \label{fig:2D-steps}
\end{subfigure}
\hfill
\begin{subfigure}[t]{0.20\linewidth}
    \centering
    \includegraphics[width=1.0\linewidth]{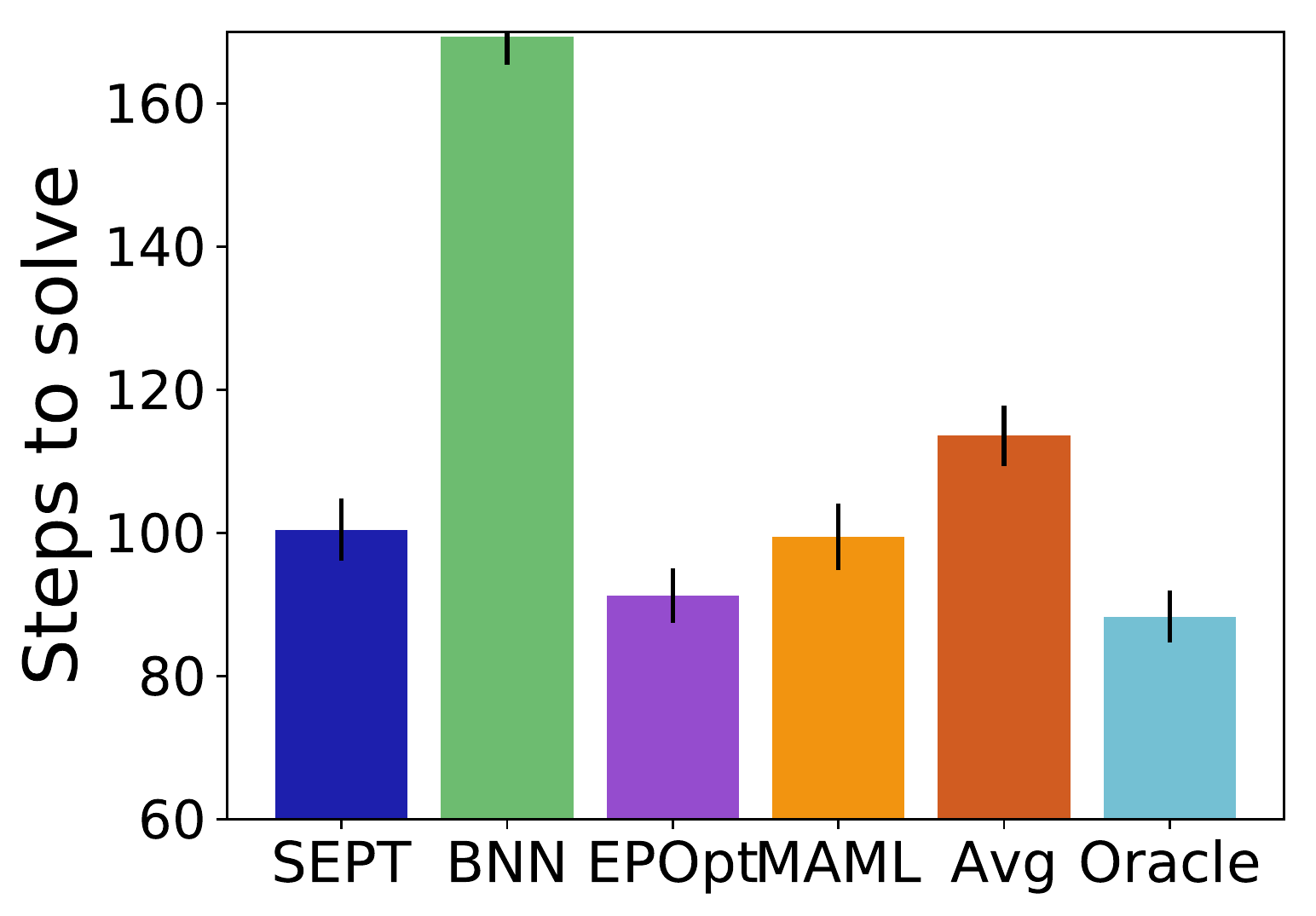}
    \caption{Acrobot}
    \label{fig:acrobot-steps}
\end{subfigure}
\hfill
\begin{subfigure}[t]{0.19\linewidth}
    \centering
    \includegraphics[width=0.95\linewidth]{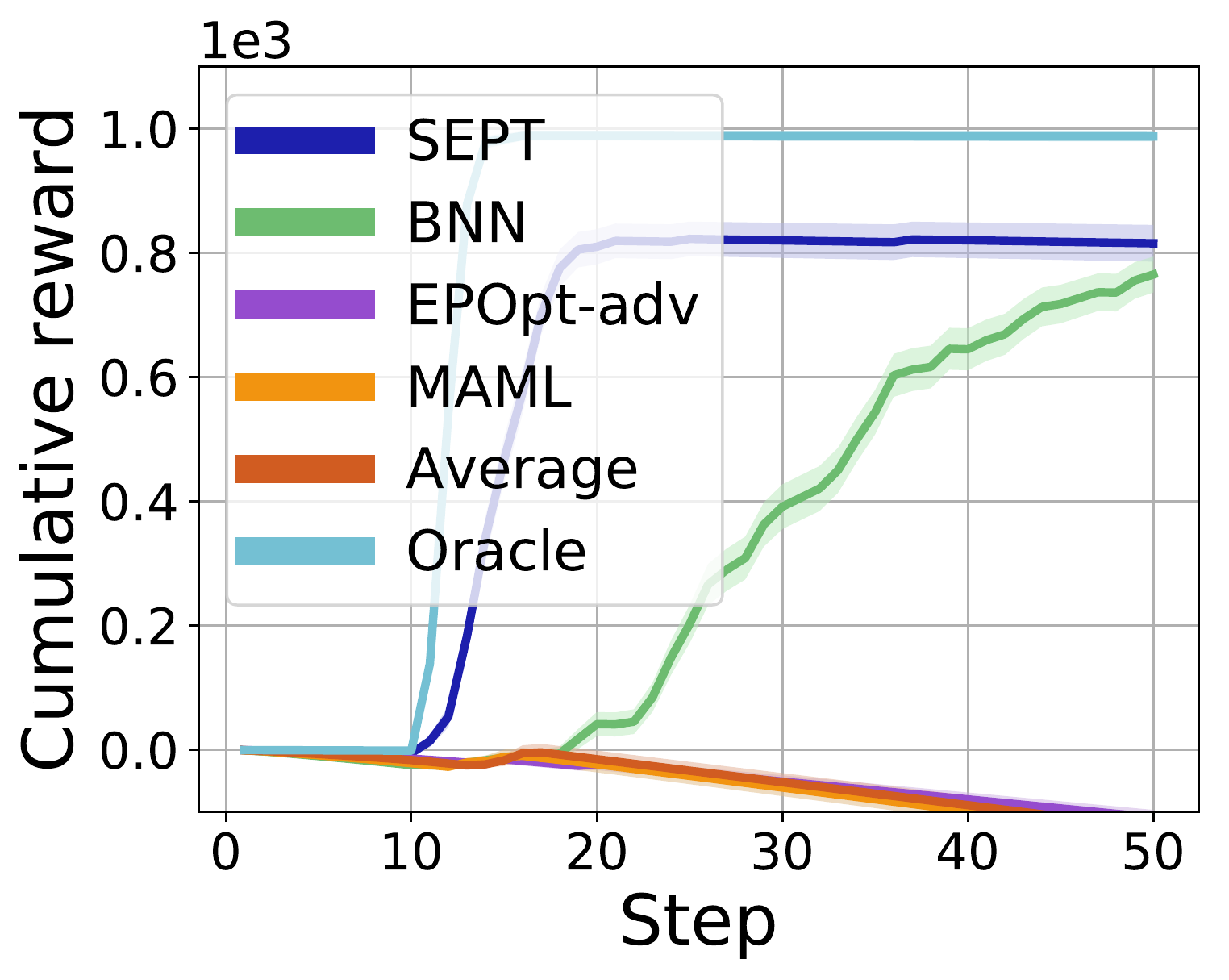}
    \caption{2D navigation}
    \label{fig:2D}
\end{subfigure}
\hfill
\begin{subfigure}[t]{0.19\linewidth}
    \centering
    \includegraphics[width=0.95\linewidth]{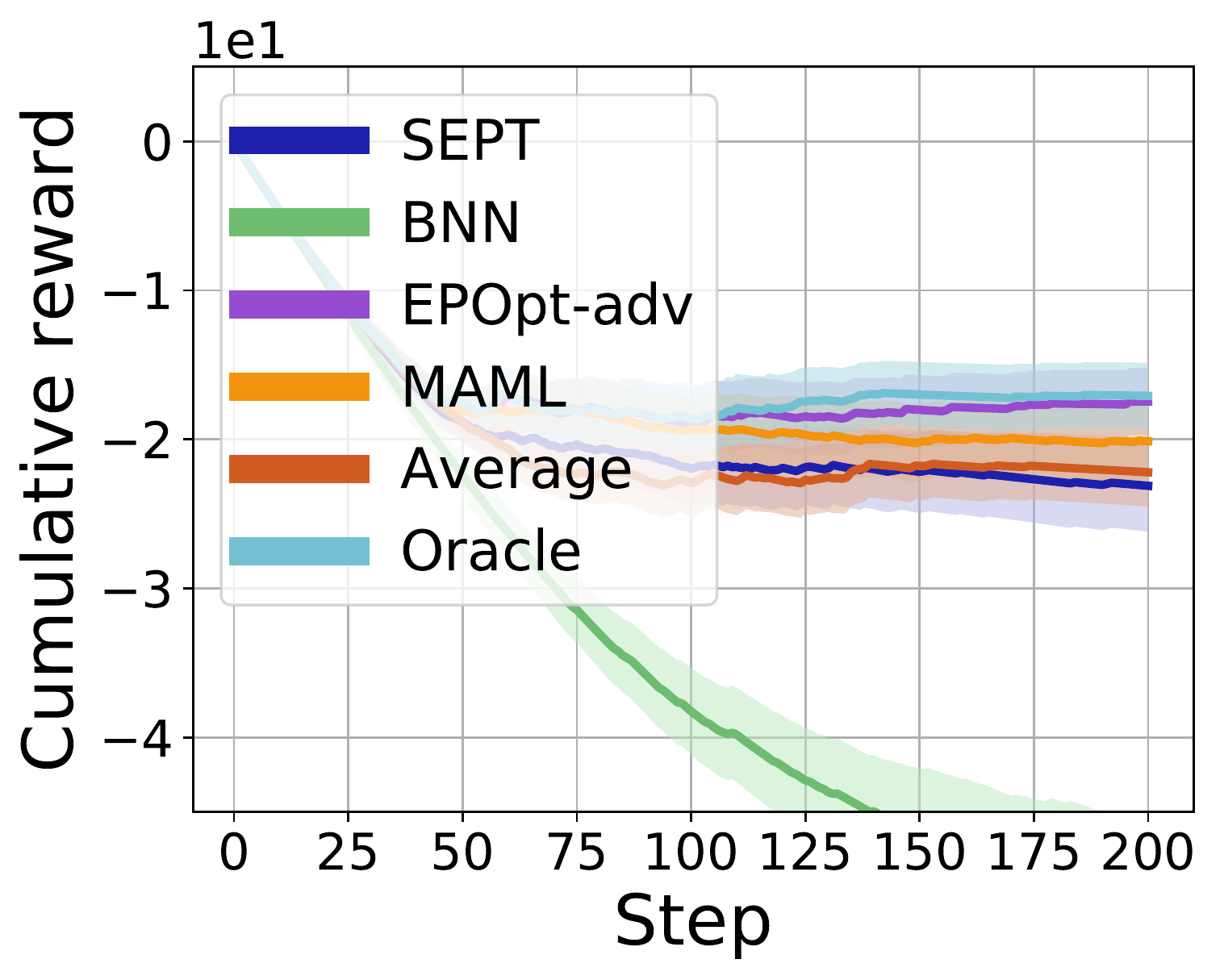}
    \caption{Acrobot}
    \label{fig:acrobot}
\end{subfigure}
\hfill
\begin{subfigure}[t]{0.19\linewidth}
    \centering
    \includegraphics[width=0.95\linewidth]{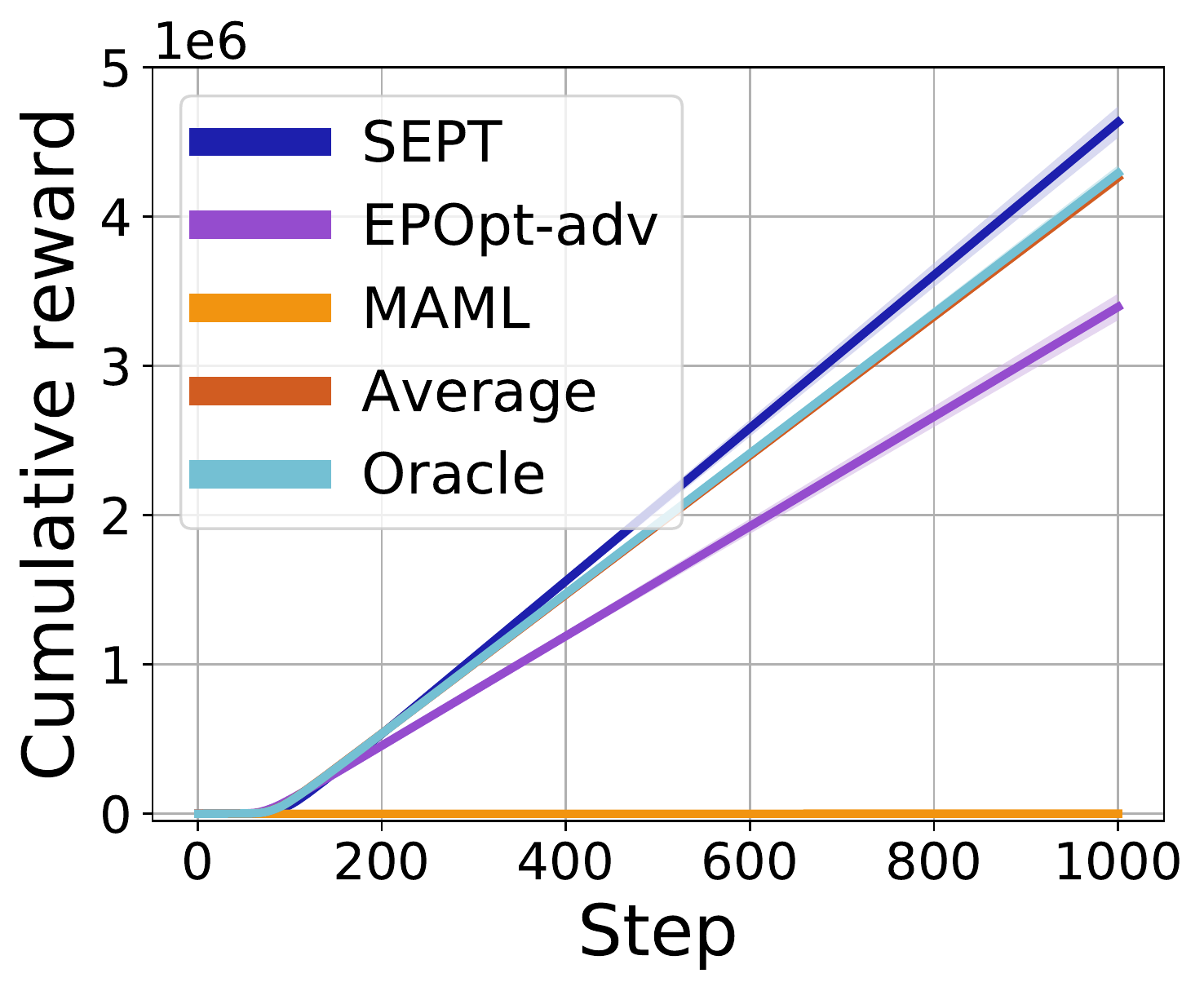}
    \caption{HIV}
    \label{fig:hiv}
\end{subfigure}

\begin{subfigure}[t]{0.20\linewidth}
    \centering
    \includegraphics[width=1.0\linewidth]{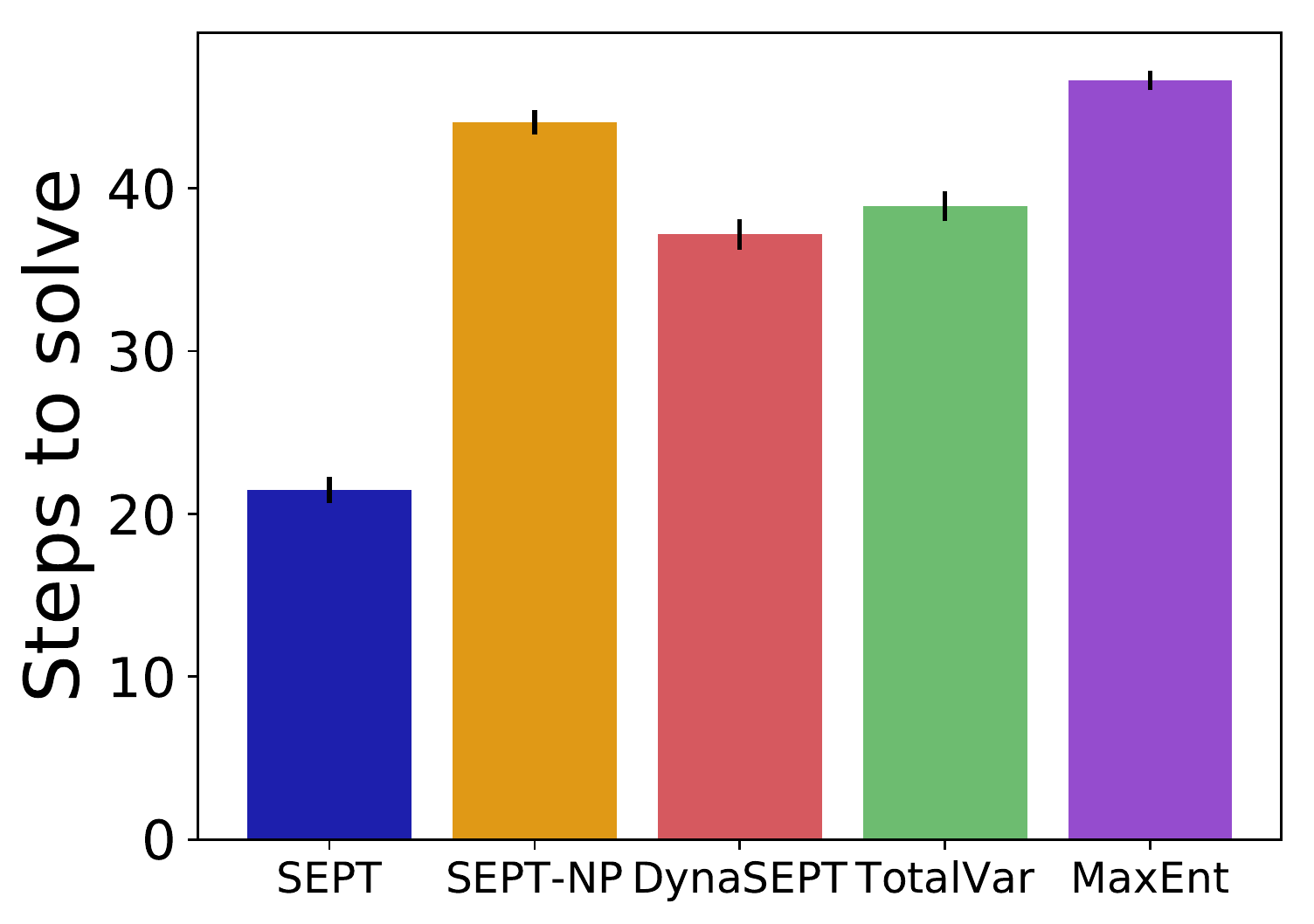}
    \caption{2D navigation}
    \label{fig:2D-ablation-steps}
\end{subfigure}
\hfill
\begin{subfigure}[t]{0.20\linewidth}
    \centering
    \includegraphics[width=1.0\linewidth]{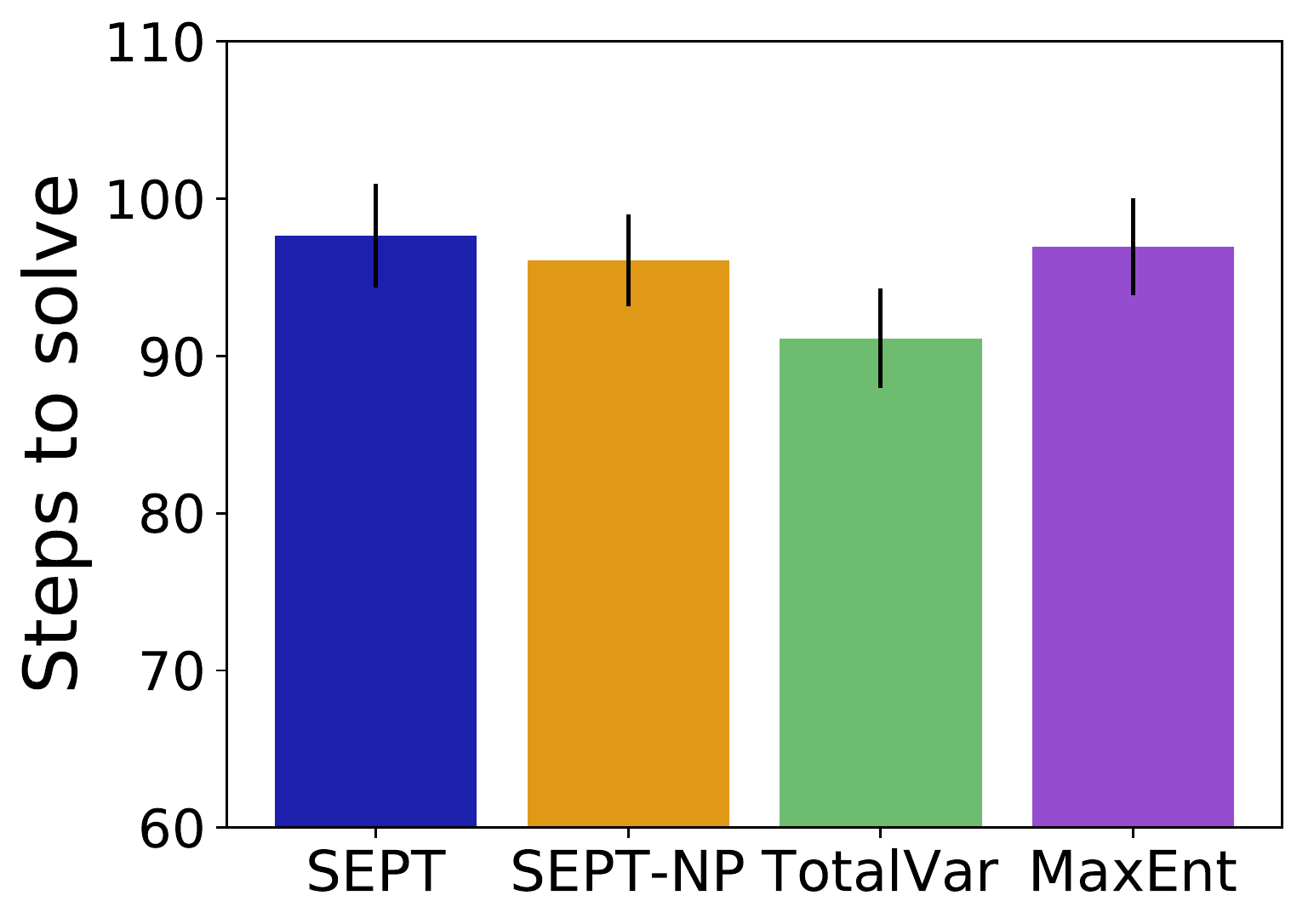}
    \caption{Acrobot}
    \label{fig:acrobot-ablation-steps}
\end{subfigure}
\hfill
\begin{subfigure}[t]{0.19\linewidth}
    \centering
    \includegraphics[width=0.95\linewidth]{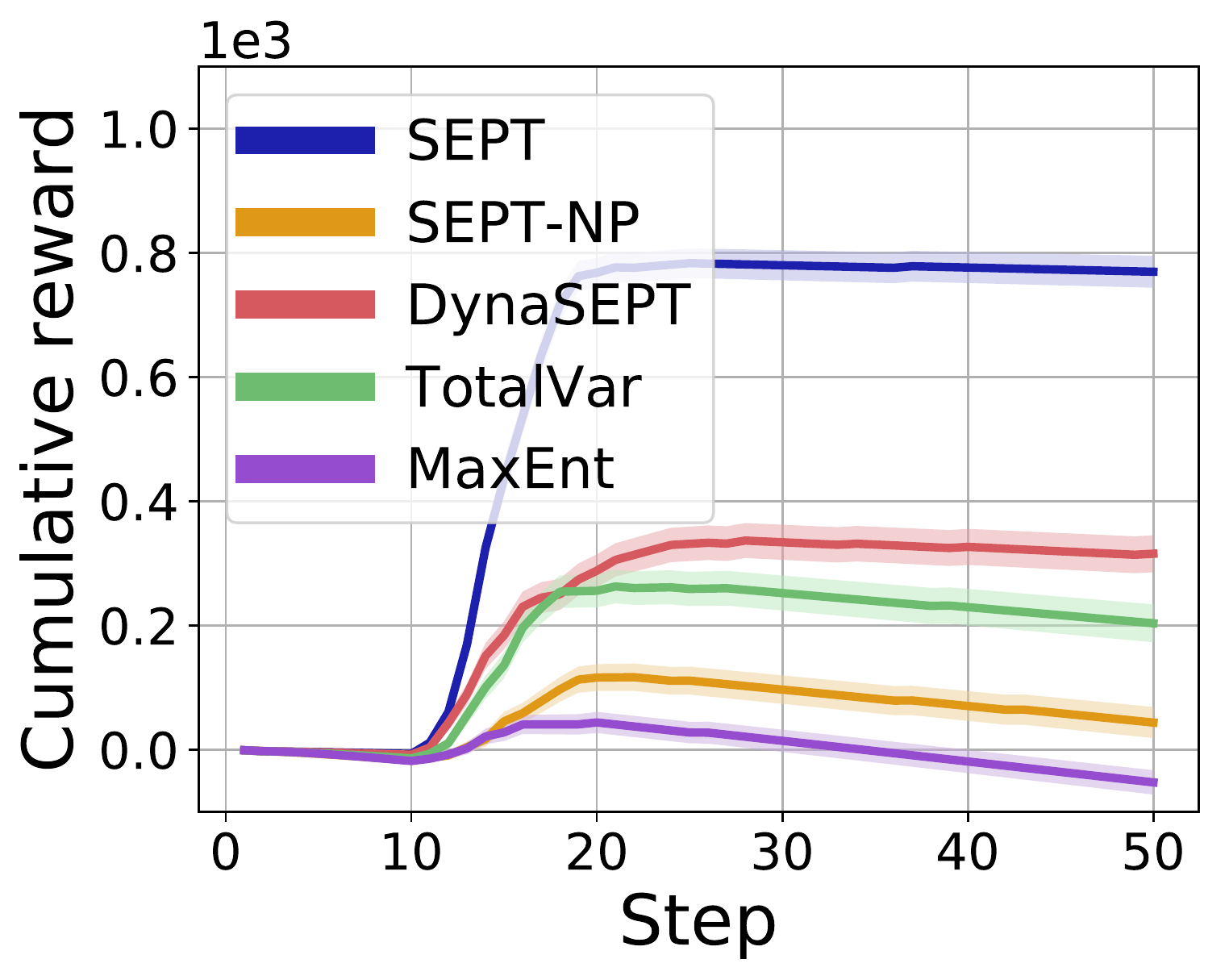}
    \caption{2D navigation}
    \label{fig:2D-ablation}
\end{subfigure}
\hfill
\begin{subfigure}[t]{0.19\linewidth}
    \centering
    \includegraphics[width=0.95\linewidth]{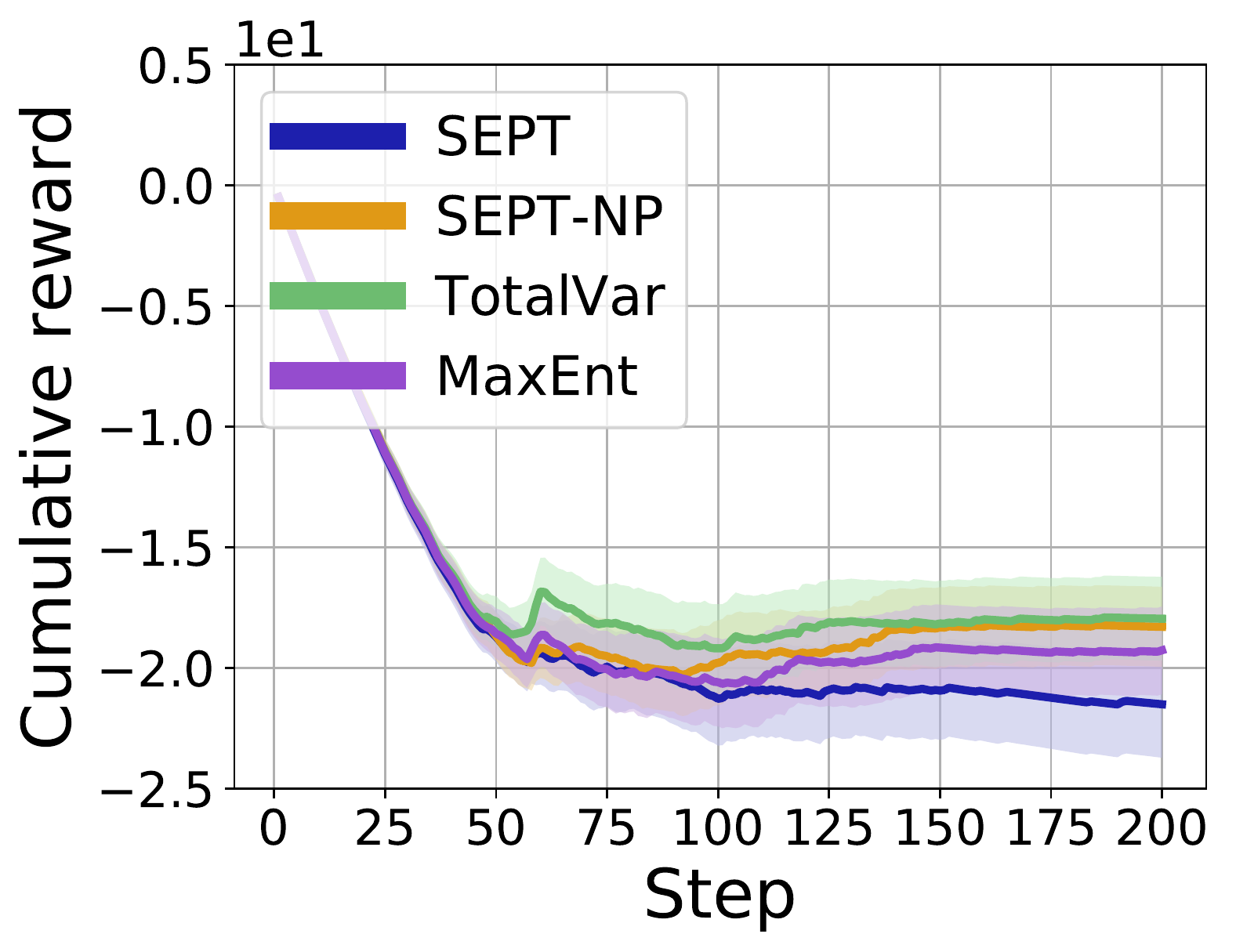}
    \caption{Acrobot}
    \label{fig:acrobot-ablation}
\end{subfigure}
\hfill
\begin{subfigure}[t]{0.19\linewidth}
    \centering
    \includegraphics[width=0.95\linewidth]{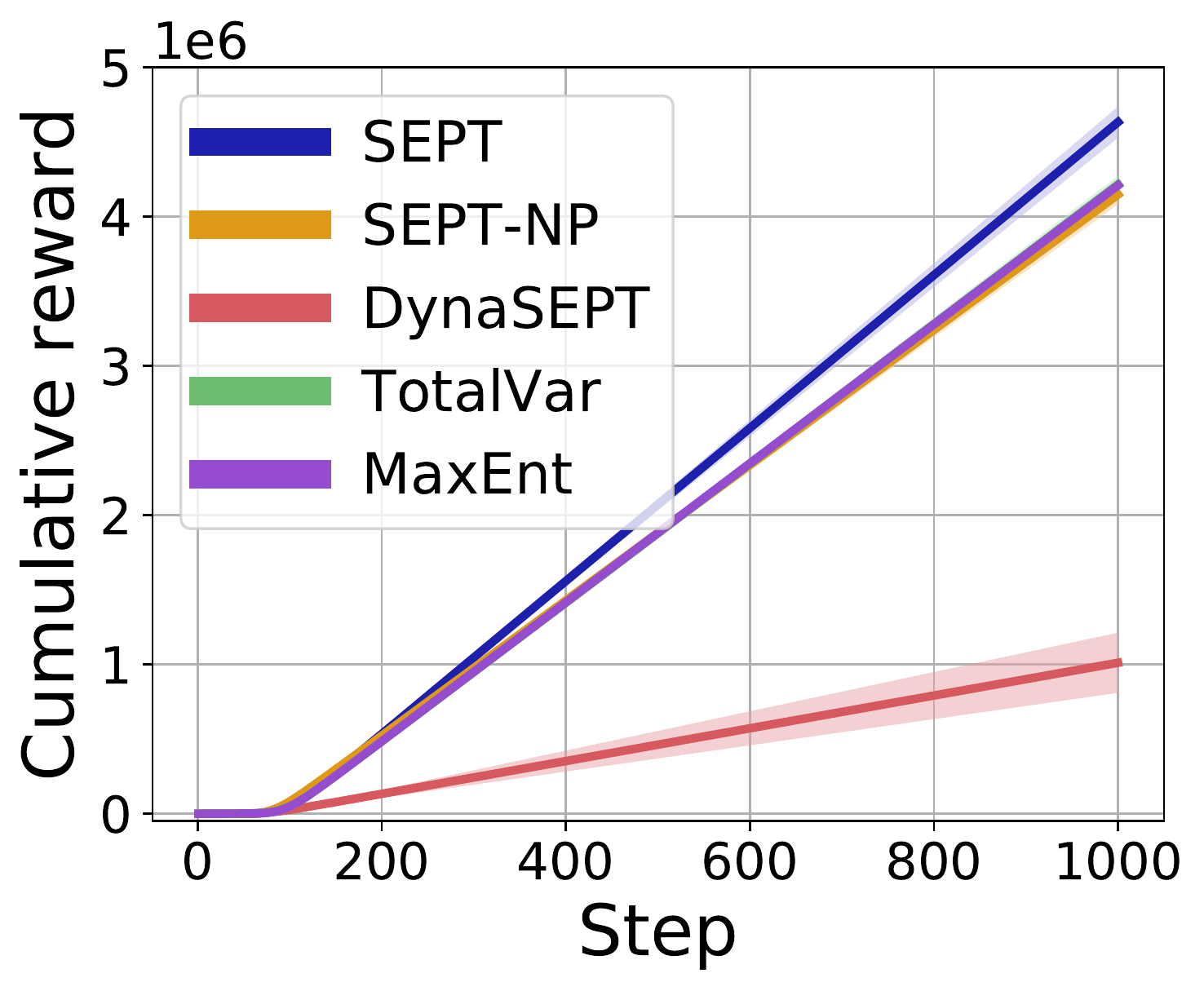}
    \caption{HIV}
    \label{fig:hiv-ablation}
\end{subfigure}
\caption{(a-e): Comparison against baselines. (a-b): Number of steps to solve 2D navigation and Acrobot in a single test episode; failure to solve is assigned a count of 50 in 2D nav and 200 in Acrobot. (c-e): Cumulative reward versus test episode step. BNN requires long computation time and showed low rewards on HIV, hence we report 3 seeds in \Cref{fig:hiv-with-bnn}. (f-j): Ablation results. DynaSEPT is out of range in (g), see \Cref{fig:acrobot-ablation-dyna}. Error bars show standard error of mean over all test instances over 20 training runs per method.}
\label{fig:main-comparisons}
\vspace{-0.5cm}
\end{figure}

\section{Results and discussion}
\label{sec:results}

2D navigation and Acrobot are solved upon attaining terminal reward of 1000 and 10, respectively.
SEPT outperforms all baselines in 2D navigation and takes significantly fewer number of steps to solve
(\Cref{fig:2D-steps,fig:2D}).
While a single instance of 2D navigation is easy for RL, handling multiple instances is highly non-trivial.
EPOpt-adv and Avg almost never solve the test instance---we set ``steps to solve'' to 50 for test episodes that were unsolved---because interpolating between instance-specific optimal policies in policy parameter space is not meaningful for any task instance.
MAML did not perform well despite having the advantage of being provided with rewards at test time, unlike SEPT. The gradient adaptation step was likely ineffective because the rewards are sparse and delayed.
BNN requires significantly more steps than SEPT, and it uses four orders of magnitude longer computation time (\Cref{table:test-times}), due to training a policy from scratch during the test episode.
Training times of all algorithms except BNN are in the same order of magnitude (\Cref{table:train-times}).

In Acrobot and HIV, where dynamics are continuous in latent variables, interpolation within policy space can produce meaningful policies, so all baselines are feasible in principle.
SEPT is statistically significantly faster than BNN and Avg, is within error bars of MAML, while EPOpt-adv outperforms the rest by a small margin (\Cref{fig:acrobot-steps,fig:acrobot}).
\Cref{fig:percent} shows that SEPT is competitive in terms of percentage of solved instances.
As the true values of latent variables for Acrobot test instances were interpolated and extrapolated from the training values, this shows that SEPT is robust to out-of-training dynamics.
BNN requires more steps due to simultaneously learning and executing control during the test episode.
On HIV, SEPT reaches significantly higher cumulative rewards than all methods.
Oracle is within the margin of error of Avg. This may be due to insufficient examples of the high-dimensional ground truth hidden parameters.
Due to its long computational time, we run three seeds for BNN on HIV, shown in \Cref{fig:hiv-with-bnn}, and find it was unable to adapt within one test episode.

Comparing directly to reported results in DPT \citep{yao2018direct}, SEPT solves 2D Navigation at least 33\% (>10 steps) faster,
and the cumulative reward of SEPT (mean and standard error) are above DPT's mean cumulative reward in Acrobot (\Cref{table:acrobot-errorbar}).
Together, these results show that methods that explicitly distinguish different dynamics (e.g., SEPT and BNN) can significantly outperform methods that implicitly interpolate in policy parameter space (e.g., Avg and EPOpt-adv) in settings where $z$ has large discontinuous effect on dynamics, such as 2D navigation.
When dynamics are continuous in latent variables (e.g., Acrobot and HIV), interpolation-based methods fare better than BNN, which faces the difficulty of learning a model of the entire family of dynamics.
SEPT worked the best in the first case and is robust to the second case because it explicitly distinguishes dynamics and does not require learning a full transition model. Moreover, SEPT does not require rewards at test time allowing it be useful on a broader class of problems than optimization-based meta-learning approaches like MAML.
\Cref{app:results} contains training curves.

\begin{figure}[t]
\centering
\begin{subfigure}[t]{0.16\linewidth}
    \centering
    \includegraphics[width=1.0\linewidth]{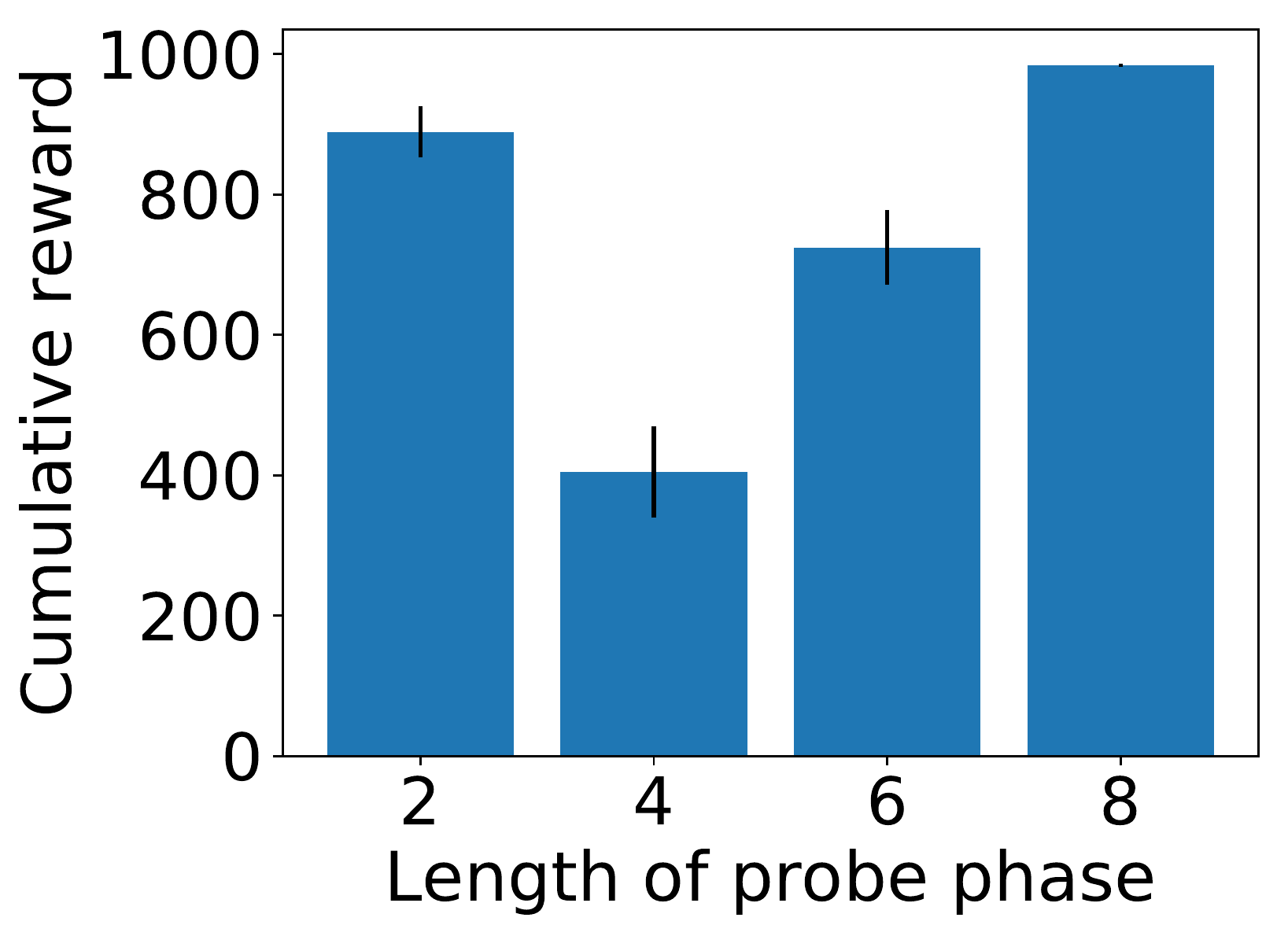}
    \caption{2D navigation}
    \label{fig:2D-length}
\end{subfigure}
\hfill
\begin{subfigure}[t]{0.16\linewidth}
    \centering
    \includegraphics[width=1.0\linewidth]{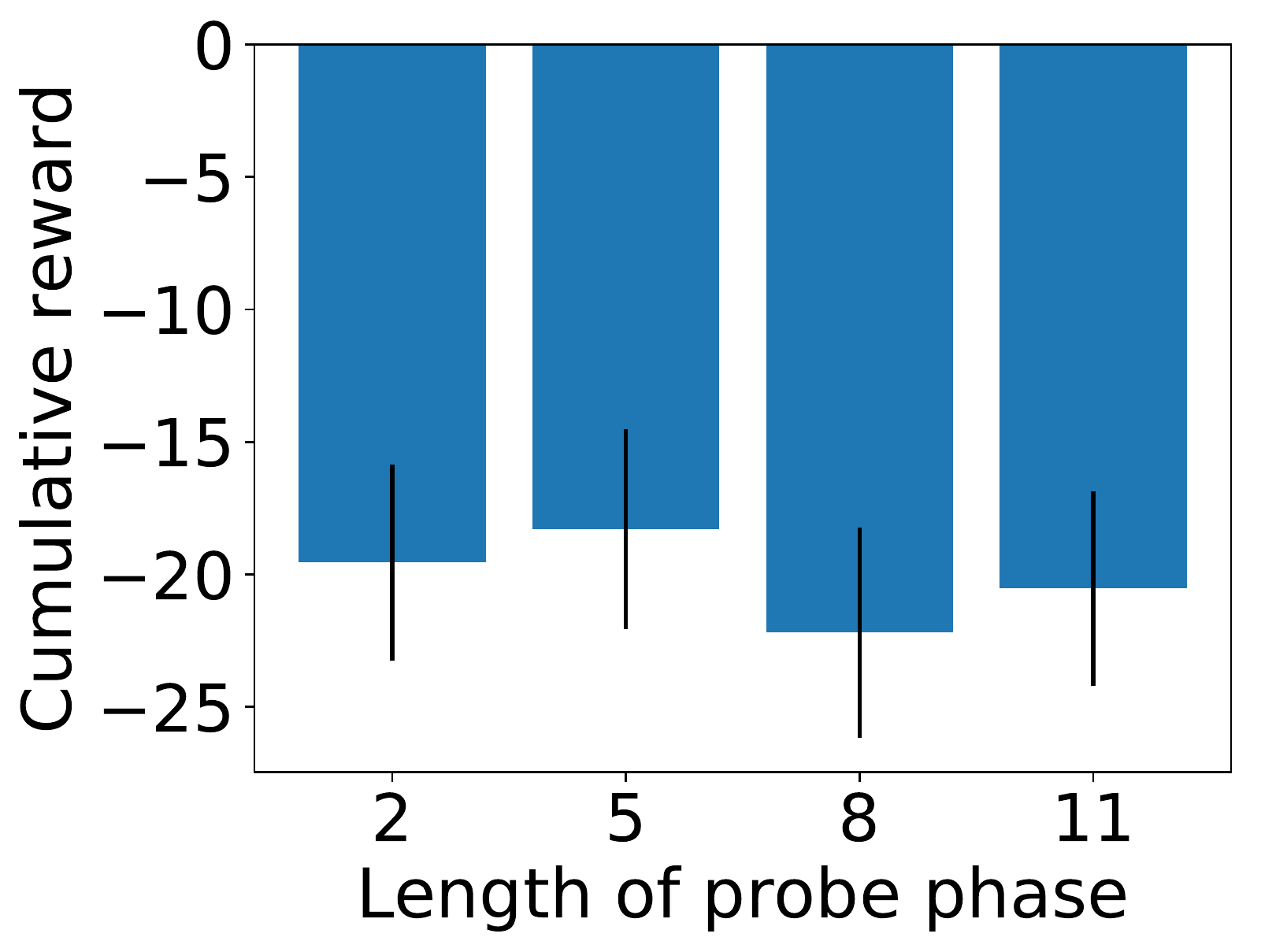}
    \caption{Acrobot}
    \label{fig:acrobot-length}
\end{subfigure}
\hfill
\begin{subfigure}[t]{0.16\linewidth}
    \centering
    \includegraphics[width=0.9\linewidth]{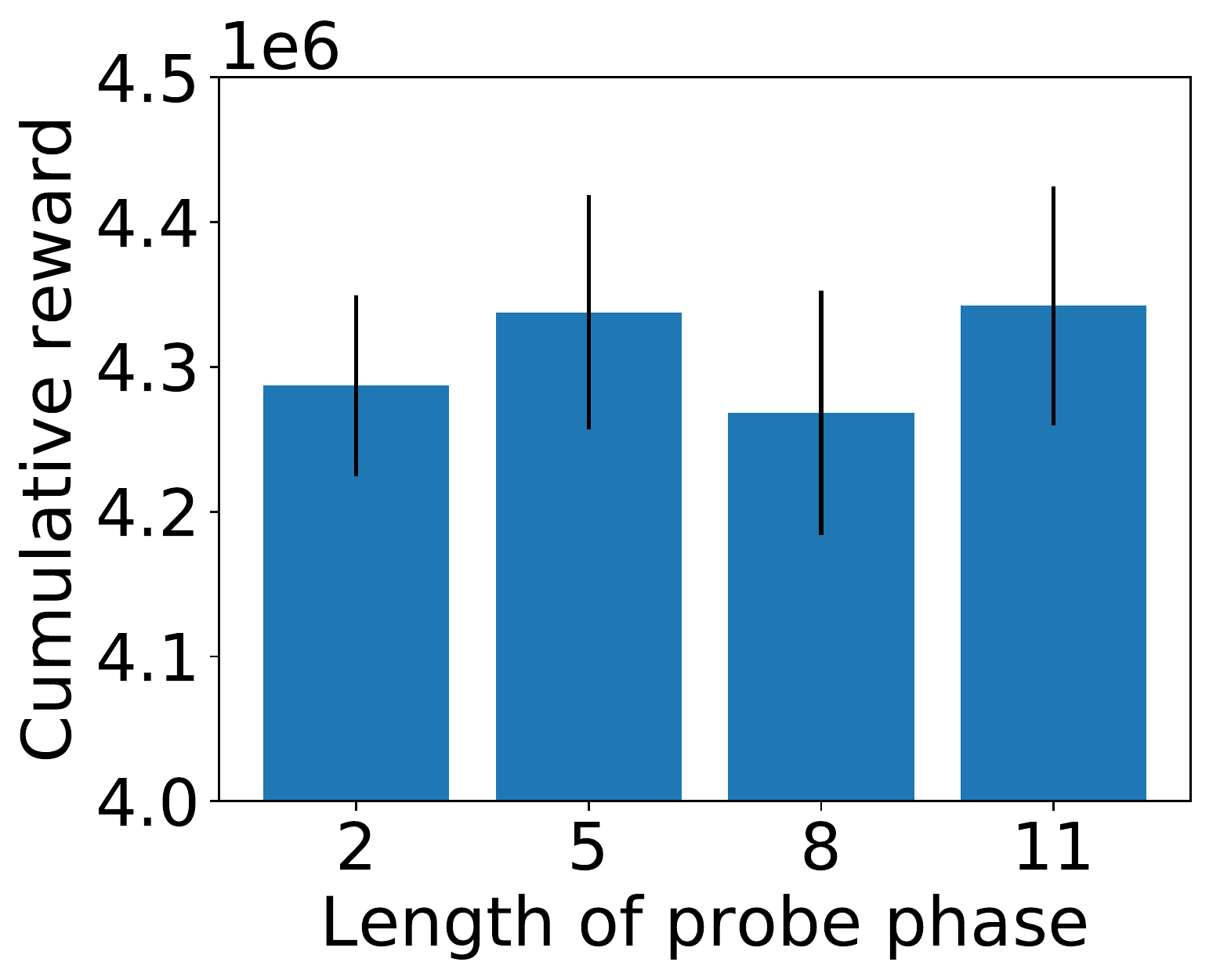}
    \caption{HIV}
    \label{fig:hiv-length}
\end{subfigure}
\begin{subfigure}[t]{0.16\linewidth}
    \centering
    \includegraphics[width=1.0\linewidth]{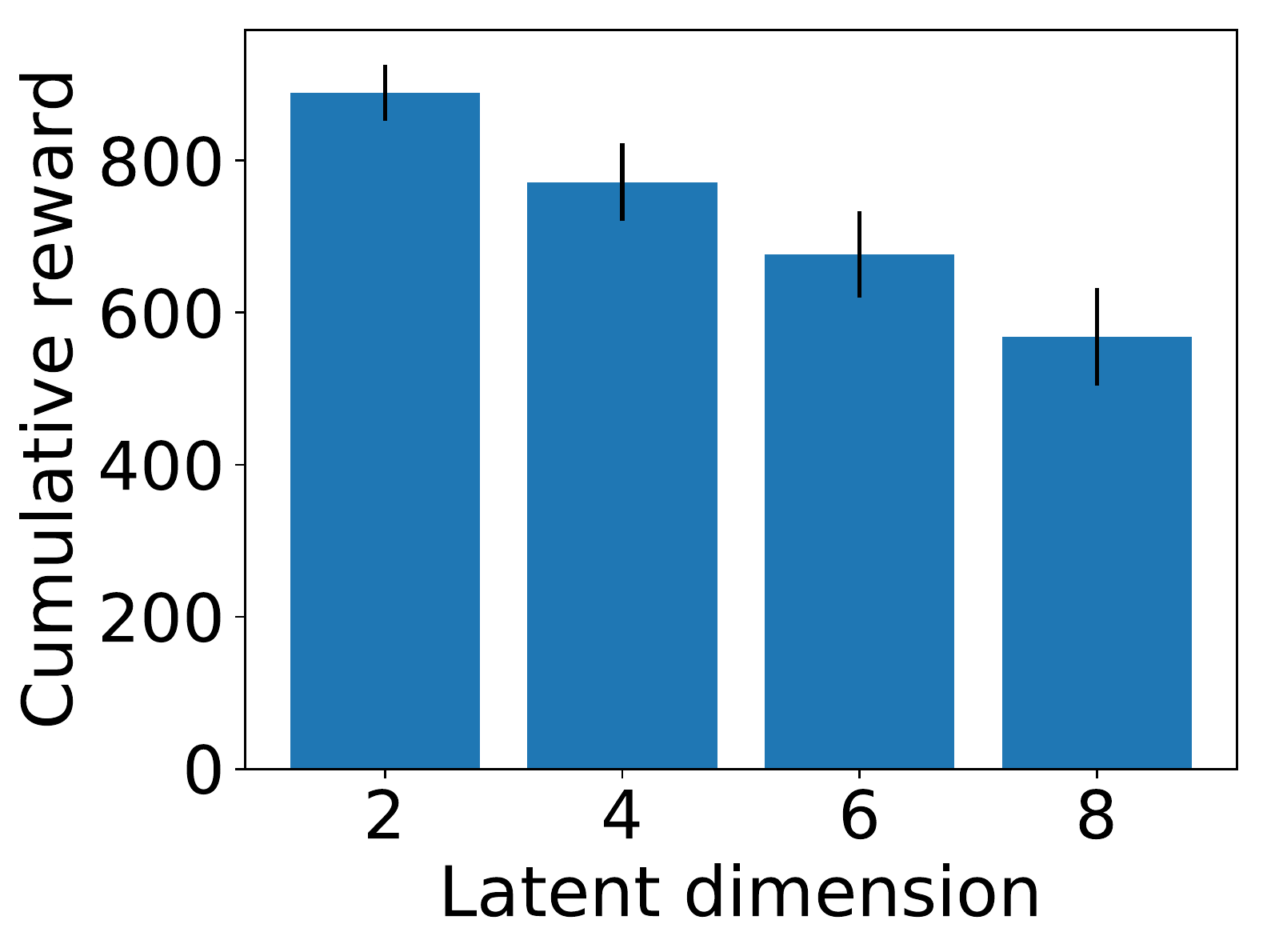}
    \caption{2D navigation}
    \label{fig:2D-r-latent}
\end{subfigure}
\hfill
\begin{subfigure}[t]{0.16\linewidth}
    \centering
    \includegraphics[width=1.0\linewidth]{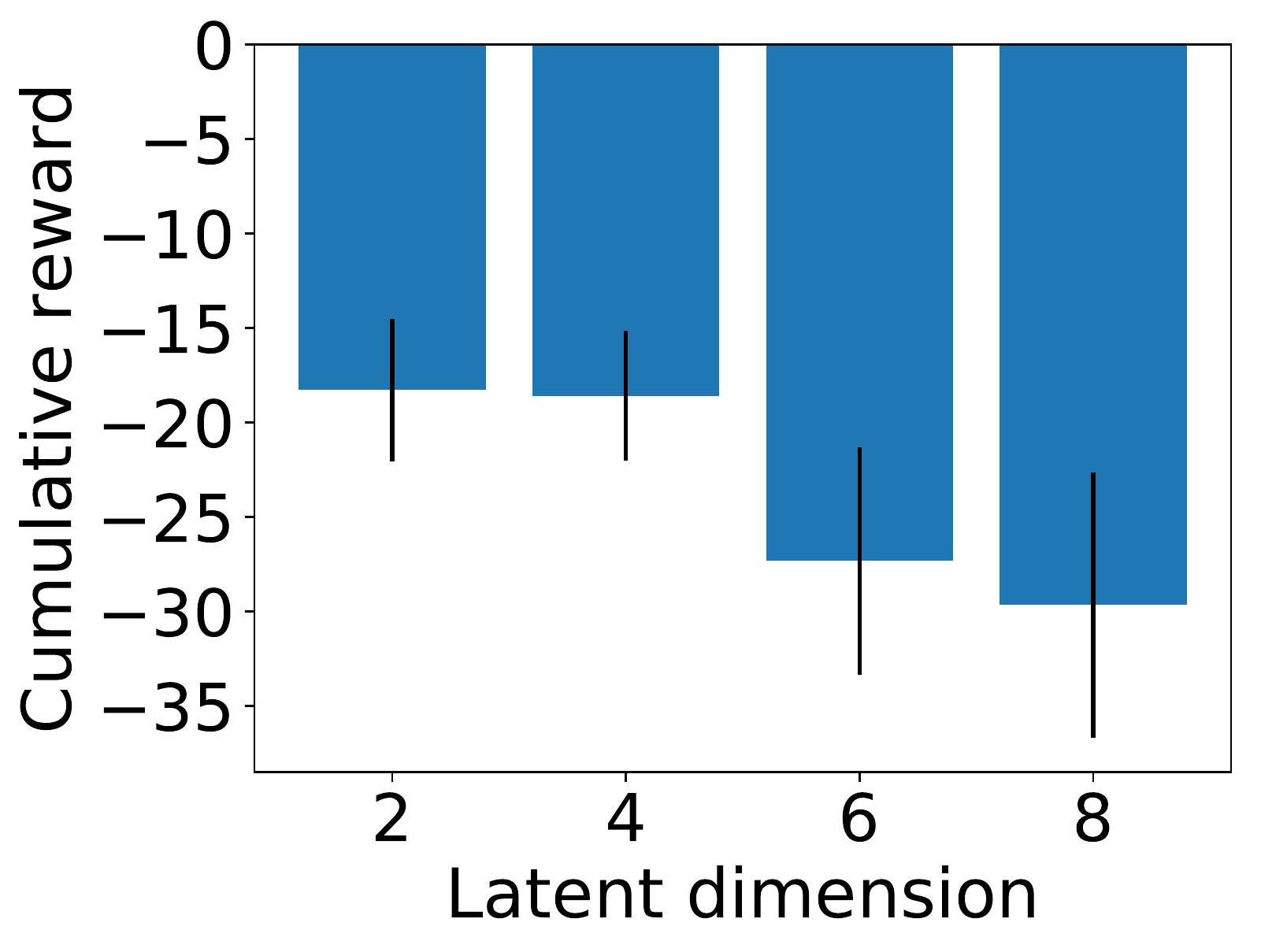}
    \caption{Acrobot}
    \label{fig:acrobot-r-latent}
\end{subfigure}
\hfill
\begin{subfigure}[t]{0.16\linewidth}
    \centering
    \includegraphics[width=0.9\linewidth]{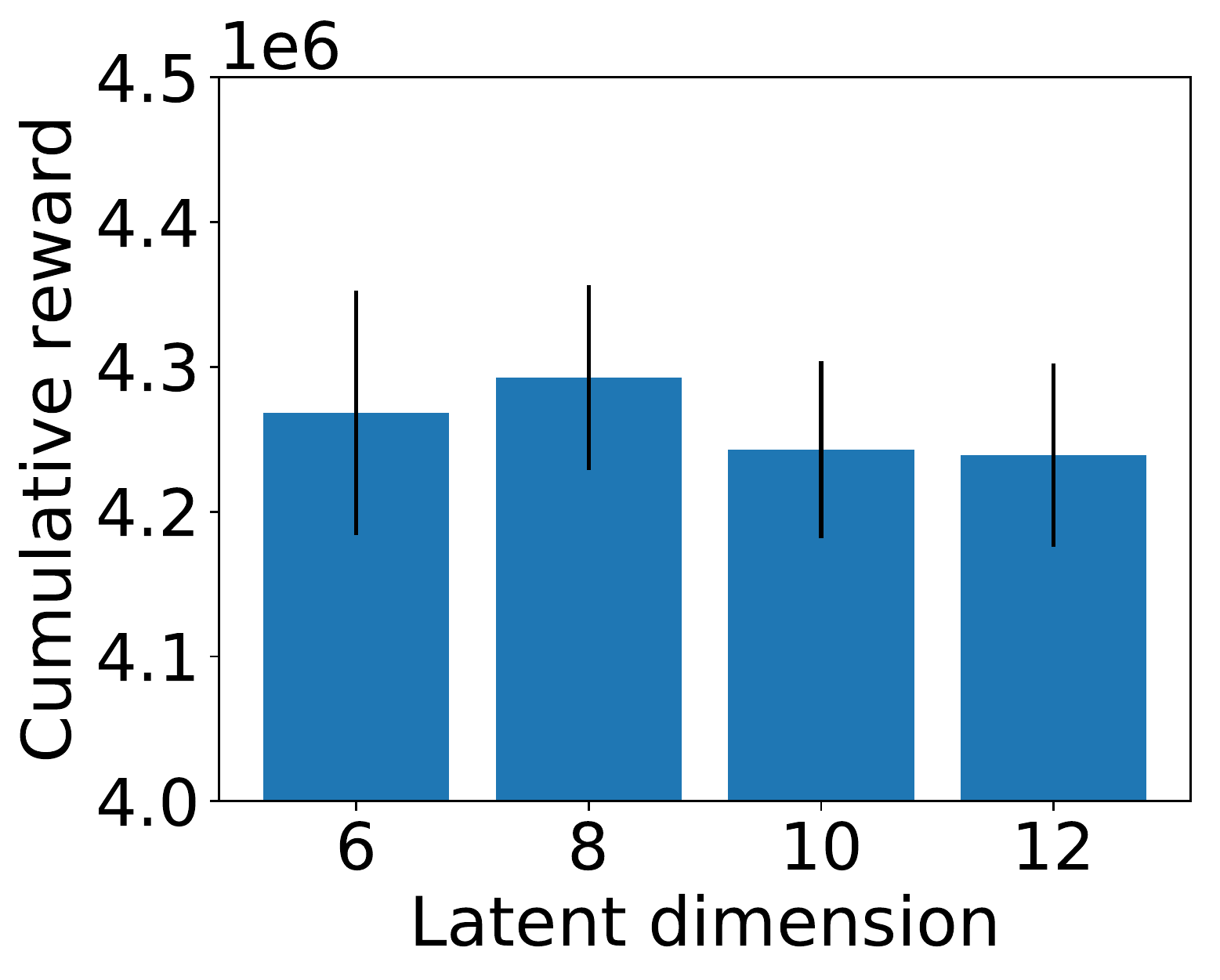}
    \caption{HIV}
    \label{fig:hiv-r-latent}
\end{subfigure}
\caption{Cumulative reward on test episode for different $T_p$ (a-c) and different $\dim(z)$ (d-f). $8M$ independent test instances for each hyperparameter setting.}
\label{fig:hyperparam}
\end{figure}

\textbf{Ablation results.}
Comparing to SEPT-NP, \Cref{fig:2D-ablation-steps,fig:acrobot-ablation-steps,fig:hiv-ablation} show that the probe phase is necessary to solve 2D navigation quickly, while giving similar performance in Acrobot and significant improvement in HIV.
SEPT significantly outperformed TotalVar in 2D navigation and HIV, while TotalVar gives slight improvement in Acrobot, showing that directly using VAE performance as the reward for probing in certain environments can be more effective than a reward that deliberately encourages perturbation of state dimensions.
The clear advantage of SEPT over MaxEnt in 2D navigation and HIV supports our hypothesis in \Cref{subsec:probe} that the variational lowerbound, rather than its negation in the maximum entropy viewpoint, should be used as the probe reward, while performance was not significantly differentiated in Acrobot.
SEPT outperforms DynaSEPT on all problems where dynamics are stationary during each instance.
On the other hand, DynaSEPT is the better choice in a non-stationary variant of 2D navigation where the dynamics ``switch'' abruptly at $t=10$ (\Cref{fig:2D-switch}).

\textbf{Robustness.}
\Cref{fig:hyperparam} shows that SEPT is robust to varying the probe length $T_p$ and $\dim(z)$.
Even with certain suboptimal probe length and $\dim(z)$, it can outperform all baselines on 2D navigation in both steps-to-solve and final reward; it is within error bars of all baselines on Acrobot based on final cumulative reward;
and final cumulative reward exceeds that of baselines in HIV.
Increasing $T_p$ means foregoing valuable steps of the control policy and increasing difficulty of trajectory reconstruction for the VAE in high dimensional state spaces;
$T_p$ is a hyper-parameter that should be validated for each application.
\Cref{app:latent-dynamics} shows the effect of $\beta$ on latent variable encodings.

\section{Conclusion and future directions}

We propose a general algorithm for single episode transfer among MDPs with different stationary dynamics, which is a challenging goal with real-world significance that deserves increased effort from the transfer learning and RL community.
Our method, Single Episode Policy Transfer (SEPT), trains a probe policy and an inference model to discover a latent representation of dynamics using very few initial steps in a single test episode, such that a universal policy can execute optimal control without access to rewards at test time.
Strong performance versus baselines in domains involving both continuous and discontinuous dependence of dynamics on latent variables show the promise of SEPT for problems where different dynamics can be distinguished via a short probing phase.

The dedicated probing phase may be improved by other objectives, in addition to performance of the inference model, to mitigate the risk and opportunity cost of probing.
An open challenge is single episode transfer in domains where differences in dynamics of different instances are not detectable early during an episode, or where latent variables are fixed but dynamics are nonstationary.
Further research on dynamic probing and control, as sketched in DynaSEPT, is one path toward addressing this challenge.
Our work is one step along a broader avenue of research on general transfer learning in RL equipped with the realistic constraint of a single episode for adaptation and evaluation.

\subsubsection*{Acknowledgments}
This work was performed under the auspices of the U.S. Department of Energy by Lawrence Livermore National Laboratory under contract DE-AC52-07NA27344. Lawrence Livermore National Security, LLC. LLNL-JRNL-791194.

\bibliographystyle{natbib}
\bibliography{citation}

\newpage
\appendix

\section{Derivations}

\label{app:proof-exploration}
\propexploration*
\begin{proof}
Assuming regularity, the gradient of the entropy is
\begin{align*}
    \nabla_{\varphi} H(p_{\varphi}(\tau)) &= - \nabla_{\varphi} \int p_{\varphi}(\tau) \log p_{\varphi}(\tau) d\tau \\
    &= - \int \nabla_{\varphi} p_{\varphi}(\tau) d\tau - \int \bigl( \nabla_{\varphi} p_{\varphi}(\tau) \bigr) \log p_{\varphi}(\tau) d\tau \\
    &= -\nabla_{\varphi} \int p_{\varphi}(\tau) d\tau - \int p_{\varphi}(\tau) \bigl( \nabla_{\varphi} \log p_{\varphi}(\tau) \bigr) \log p_{\varphi}(\tau) d\tau \\
    &= \Ebb_{p_{\varphi}(\tau)} \bigl[ \bigl( \nabla_{\varphi} \log p_{\varphi}(\tau) \bigr) \bigl( - \log p_{\varphi}(\tau) \bigr) \bigr]
\end{align*}
For trajectory $\tau := (s_0,a_0,s_1,\dotsc,s_{t})$ generated by the probe policy $\pi_{\varphi}$:
\begin{align*}
    p_{\varphi}(\tau) = p(s_0)\prod_{i=0}^{t-1} p(s_{i+1}|s_i,a_i) \pi_{\varphi}(a_i|s_i)
\end{align*}
Then
\begin{align*}
    \nabla_{\varphi} \log p_{\varphi}(\tau) &= \nabla_{\varphi} \Bigl( \log p(s_0) + \sum_{i=0}^{t-1} \log p(s_{i+1}|s_i,a_i) + \sum_{i=0}^{t-1} \log \pi_{\varphi}(a_i|s_i) \Bigr)
\end{align*}
Since $p(s_0)$ and $p(s_{i+1}|s_i,a_i)$ do not depend on $\varphi$, we get
\begin{align*}
    \nabla_{\varphi} \log p_{\varphi}(\tau) = \nabla_{\varphi} \sum_{i=0}^{t-1} \log \pi_{\varphi}(a_i|s_i)
\end{align*}
Substituting this into the gradient of the entropy gives \eqref{eq:exploration-gradient}.
\end{proof}

\section{Testing phase of SEPT}
\label{app:test}

\begin{algorithm}[ht]
\caption{Single Episode Policy Transfer: testing phase}
\label{alg:sept-test}
\begin{algorithmic}[1]
\Procedure{SEPT-test}{}
\State Restore trained decoder $\psi$, encoder $\phi$, probe policy $\varphi$, and control policy $\theta$
\State Run probe policy $\pi_{\varphi}$ for $T_p$ time steps and record trajectory $\tau_p$
\State Use $\tau_p$ with decoder $q_{\phi}(z|\tau)$ to estimate $\zhat$
\State Use $\zhat$ with control policy $\pi_{\theta}(a|s,z)$ for the remaining duration of the test episode
\EndProcedure
\end{algorithmic}
\end{algorithm}

\section{DynaSEPT}
\label{app:dynasept}

In our problem formulation, it is not necessary to compute $\zhat$ at every step of the test episode, as each instance is a stationary MDP and change of instances is known.
However, removing the common assumption of stationarity leads to time-dependent transition functions $\Tcal_z(s'|s,a)$, which introduces problematic cases.
For example, a length $T_p$ probing phase would fail if $z$ leads to a switch in dynamics at time $t > T_p$,
such as when poorly understood drug-drug interactions lead to abrupt changes in dynamics during co-medication therapies \citep{kastrin2018predicting}.
Here we describe an alternative general algorithm for non-stationary dynamics, which we call DynaSEPT.
We train a single policy $\pi_{\theta}(a|s,z,\eta)$ that dynamically decides whether to probe for better inference or act to maximize the MDP reward $R_{\text{env}}$, based on a scalar-valued function $\eta \colon \Rbb \rightarrow [0,1]$ representing the degree of uncertainty in posterior inference, which is updated at every time step.
The total reward is $R_\textrm{tot}(\tau) := \eta R_p(\tau) + (1-\eta) R_\textrm{env}(\tau_f)$,
where $\tau$ is a short sliding-window trajectory of length $T_p$, and $\tau_f$ is the final state of $\tau$.
The history-dependent term $R_p(\tau)$ is equivalent to a delayed reward given for executing a sequence of probe actions.
Following the same reasoning for SEPT, one choice for $R_p(\tau)$ is $\Lcal(\phi,\psi;\tau)$.
Assuming the encoder outputs variance $\sigma^2_i$ of each latent dimension, one choice for $\eta$ is a normalized standard deviation over all dimensions of the latent variable, i.e. $\eta := \frac{1}{D} \sum_{i=1}^D \sigma_i(q_{\phi})/\sigma_{i,\textrm{max}}(q_{\phi})$
, where $\sigma_{i,\textrm{max}}$ is a running max of $\sigma_i$.

Despite its novelty, we consider DynaSEPT only for rare nonstationary dynamics and merely as a baseline in the predominant case of stationary dynamics, where SEPT is our primary contribution.
DynaSEPT does not have any clear advantage over SEPT when each instance $\Tcal_z$ is a stationary MDP.
DynaSEPT requires $\eta$ to start at 1.0, representing complete lack of knowledge about latent variables, and it still requires the choice of hyperparameter $T_p$.
Only after $T_p$ steps can it use the uncertainty of $q_{\phi}(z|\tau)$ to adapt $\eta$ and continue to generate the sliding window trajectory to improve $\hat{z}$.
By this time, SEPT has already generated an optimized sequence using $\pi_{\varphi}$ for the encoder to estimate $\zhat$.
If a trajectory of length $T_p$ is sufficient for computing a good estimate of latent variables, then SEPT is expected to outperform DynaSEPT.

\begin{figure}[H]
\centering
\begin{subfigure}[t]{0.32\linewidth}
    \centering
    \includegraphics[width=1.0\linewidth]{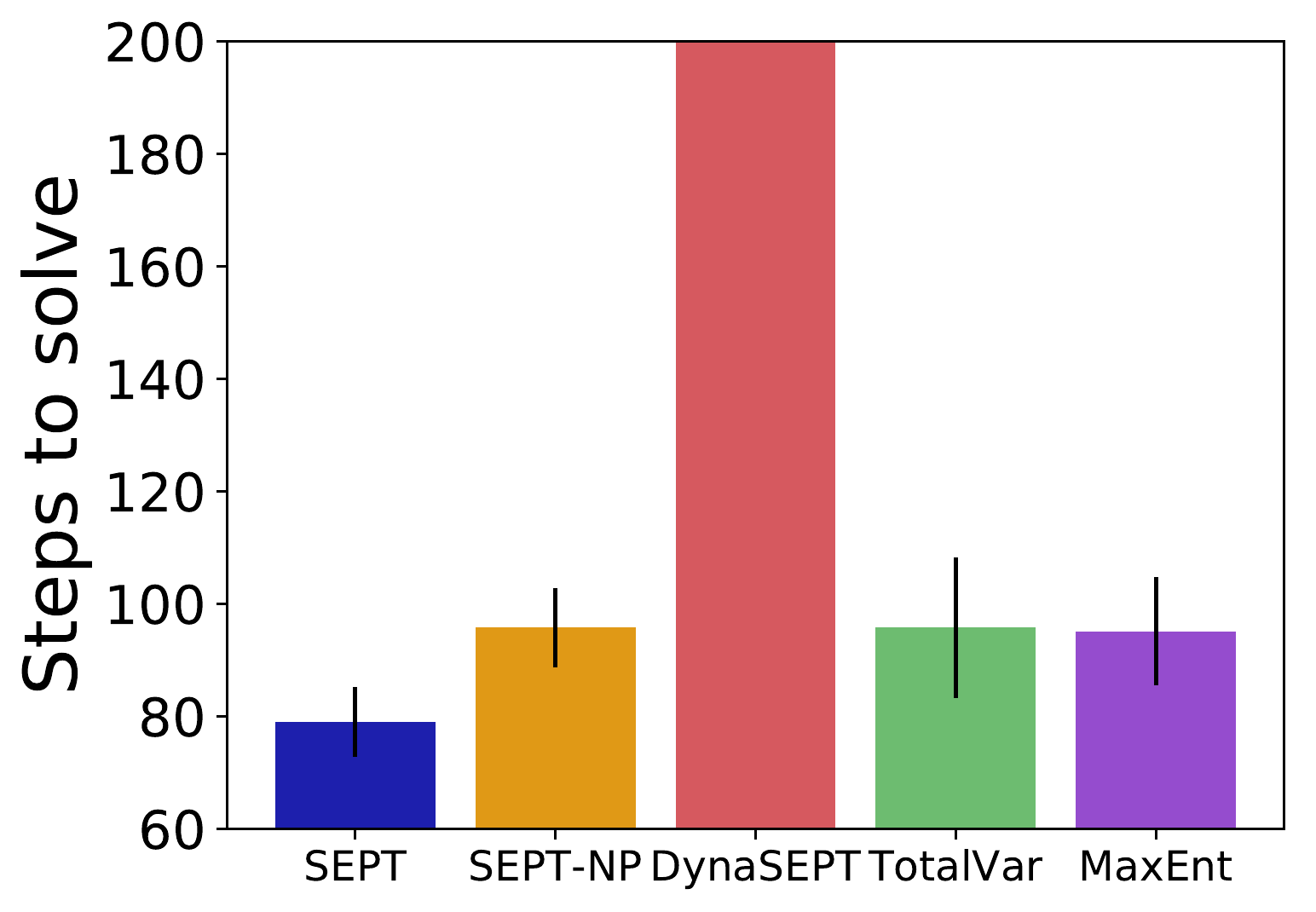}
    \caption{Acrobot}
    \label{fig:acrobot-ablation-dyna}
\end{subfigure}
\hfill
\begin{subfigure}[t]{0.32\linewidth}
    \centering
    \includegraphics[width=0.87\linewidth]{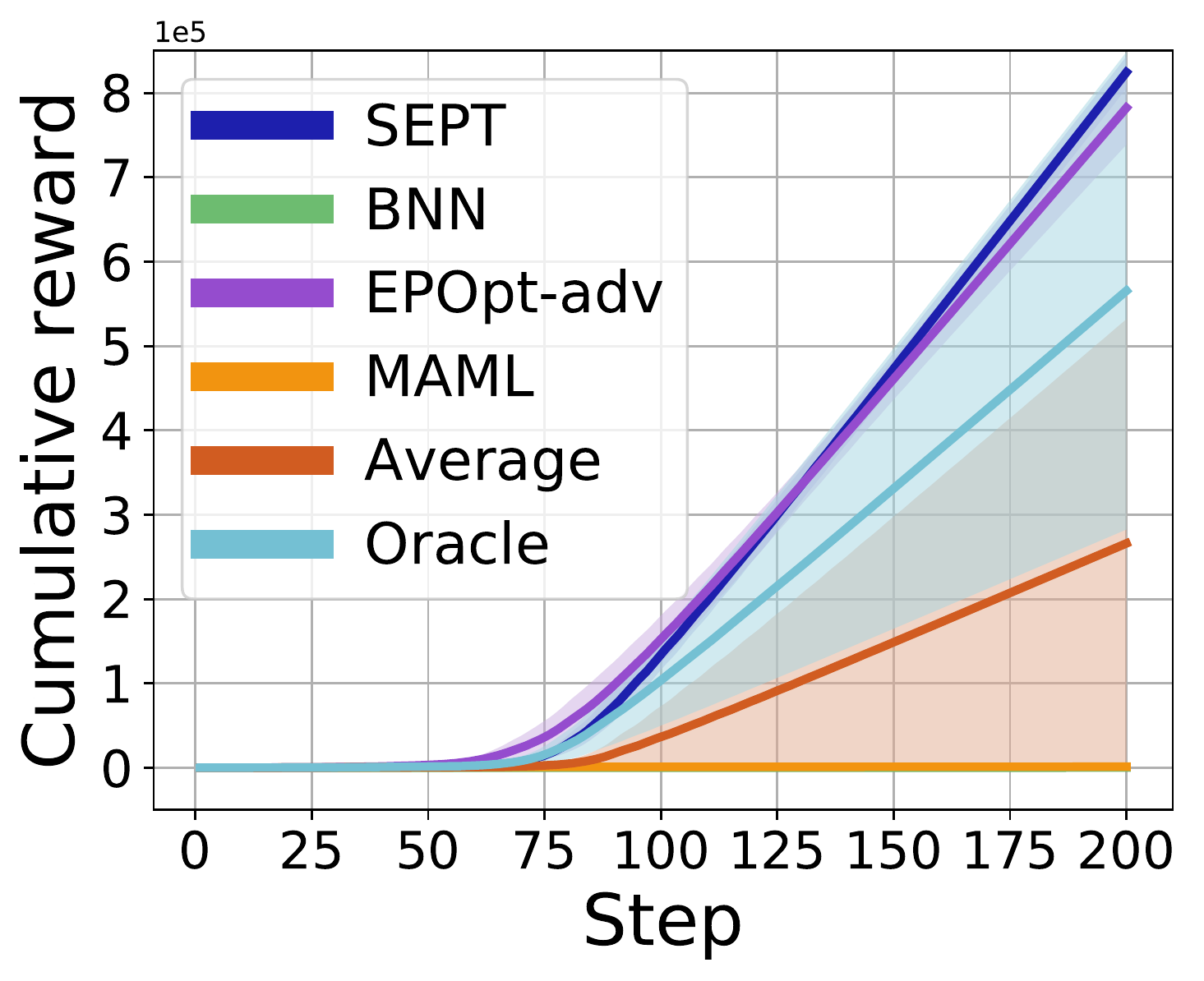}
    \caption{HIV}
    \label{fig:hiv-with-bnn}
\end{subfigure}
\hfill
\begin{subfigure}[t]{0.32\linewidth}
    \centering
    \includegraphics[width=0.9\linewidth]{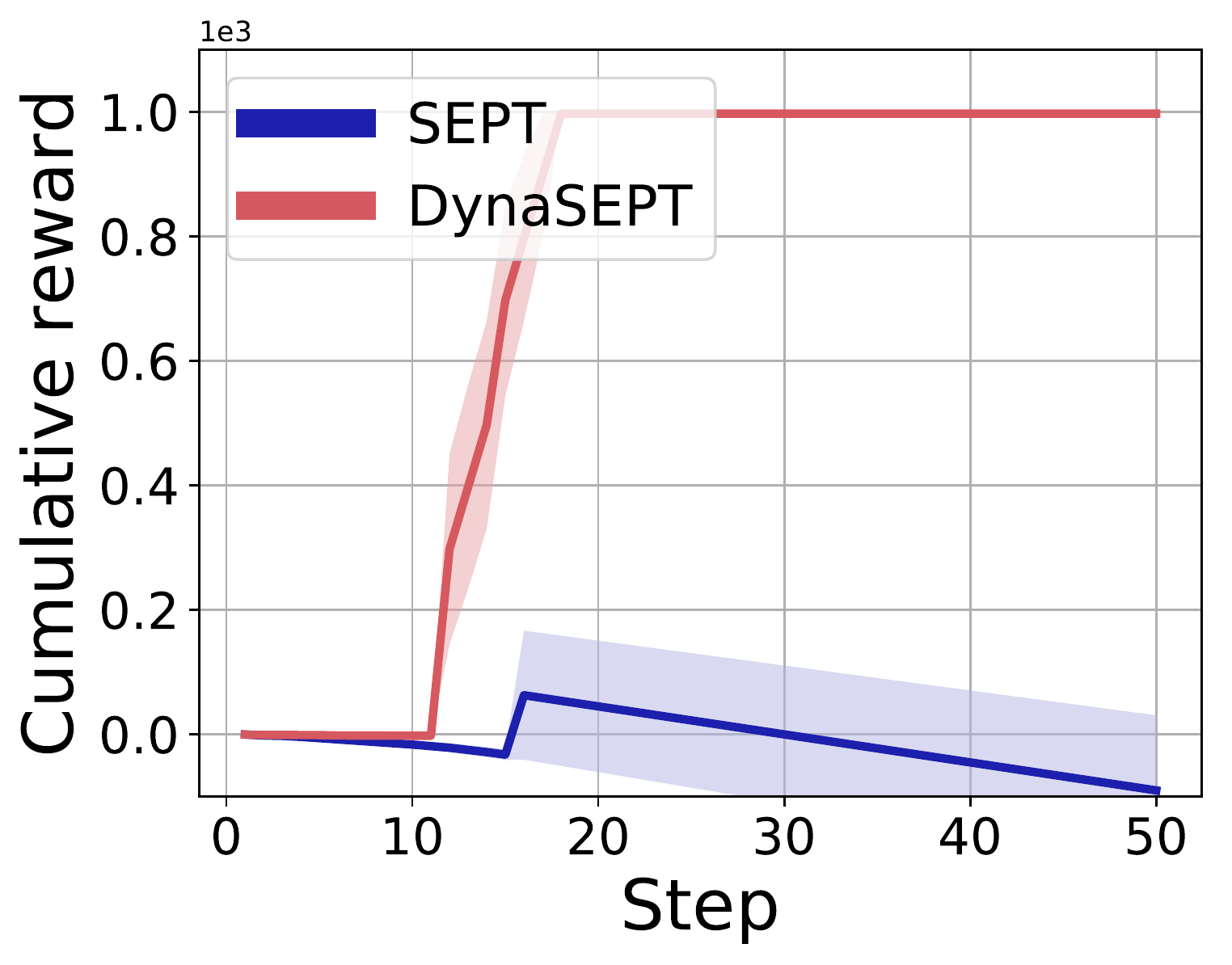}
    \caption{2D switch}
    \label{fig:2D-switch}
\end{subfigure}
\caption{(a) Ablations on Acrobot, including DynaSEPT with 3 seeds; (b) BNN and MAML attained orders of magnitude lower rewards than other baselines (3 seeds); (c) DynaSEPT performs well on nonstationary dynamics in 2D navigation.}
\label{fig:ablations-dyna}
\end{figure}

\section{Supplementary experimental results}
\label{app:results}

\subsection{2D and Acrobot}
\label{app:solve-steps}

2D navigation and Acrobot have a definition of ``solved''.
\Cref{table:solve-steps} reports the number of steps in a test episode required to solve the MDP.
Average and standard deviation were computed across all test instances and across all independently trained models.
If an episode was not solved, the maximum allowed number of steps was used (50 for 2D navigation and 200 for Acrobot).
\Cref{table:acrobot-errorbar} shows the mean and standard error of the cumulative reward over test episodes on Acrobot.
The reported mean cumulative value for DPT in Acrobot is -27.7 \citep{yao2018direct} .

\begin{table}[H]
\parbox{.65\linewidth}{
  \caption{Steps to solve 2D navigation and Acrobot}
  \label{table:solve-steps}
  \centering
  \begin{tabular}{llll}
    \toprule
	 & 2D navigation & Acrobot \\
    \midrule
    Average & 49$\pm$0.5 & 114$\pm$4.2 \\
    Oracle & 12$\pm$0.3 & 88$\pm$3.6 \\
    BNN & 34$\pm$0.8 & 169$\pm$4.0 \\
    EPOpt-adv & 49$\pm$0.3 & 91$\pm$3.8 \\
    MAML & 49$\pm$0.4 & 99$\pm$4.6 \\
    SEPT & 20$\pm$0.9 & 100$\pm$4.4 \\
    \bottomrule
  \end{tabular}
  }
  \hfill
  \parbox{.32\linewidth}{
  \caption{Error bars on Acrobot}
  \label{table:acrobot-errorbar}
  \centering
  \begin{tabular}{llll}
    \toprule
	 & Acrobot \\
    \midrule
    Average & -22.2$\pm$2.3 \\
    Oracle & -17.1$\pm$2.2 \\
    BNN & -50.6$\pm$4.7 \\
    EPOpt-adv & -17.5$\pm$2.2 \\
    MAML & -20.1$\pm$2.6 \\
    SEPT & -23.1$\pm$3.1 \\
    \bottomrule
  \end{tabular}
  }
\end{table}


\subsection{Timing comparison}
\label{app:timing}

\begin{table}[H]
  \caption{Total training times in seconds on all experiment domains}
  \label{table:train-times}
  \centering
  \begin{tabular}{lllll}
    \toprule
	 & 2D navigation & Acrobot & HIV \\
    \midrule
    Average & 1.3e3$\pm$277 & 1.0e3$\pm$85 & 1.4e3$\pm$47 \\
    Oracle & 0.6e3$\pm$163 & 1.1e3$\pm$129 & 1.5e3$\pm$47 \\
    BNN & 2.9e3$\pm$244 & 9.0e4$\pm$3.0e3 & 4.3e4$\pm$313 \\
    EPOpt-adv & 1.1e3$\pm$44 & 1.1e3$\pm$1.0 & 1.9e3$\pm$33 \\
    MAML & 0.9e3$\pm$116 & 1.1e3$\pm$96 & 1.3e3$\pm$6.0 \\
    SEPT & 1.9e3$\pm$70 & 2.3e3$\pm$1e3 & 2.8e3$\pm$11 \\
    \bottomrule
  \end{tabular}
\end{table}

\begin{table}[H]
  \caption{Test episode time in seconds on all experiment domains}
  \label{table:test-times}
  \centering
  \begin{tabular}{lllll}
    \toprule
	 & 2D navigation & Acrobot & HIV \\
    \midrule
    Average & 0.04$\pm$0.04 & 0.09$\pm$0.04 & 0.42$\pm$0.01 \\
    Oracle & 0.02$\pm$0.04 & 0.09$\pm$0.04 & 0.45$\pm$0.02 \\
    BNN & 2.6e3$\pm$957 & 2.8e3$\pm$968 & 1.4e3$\pm$8.8 \\
    EPOpt-adv & 0.04$\pm$0.04 & 0.10$\pm$0.06 & 0.45$\pm$0.03 \\
    MAML & 0.05$\pm$0.05 & 0.10$\pm$0.07 & 0.48$\pm$0.01 \\
    SEPT & 0.04$\pm$0.07 & 0.12$\pm$0.10 & 0.60$\pm$0.02 \\
    \bottomrule
  \end{tabular}
\end{table}

\subsection{Percent of solved episodes}

\begin{figure}[H]
\centering
\begin{subfigure}[t]{0.49\linewidth}
    \centering
    \includegraphics[width=0.7\linewidth]{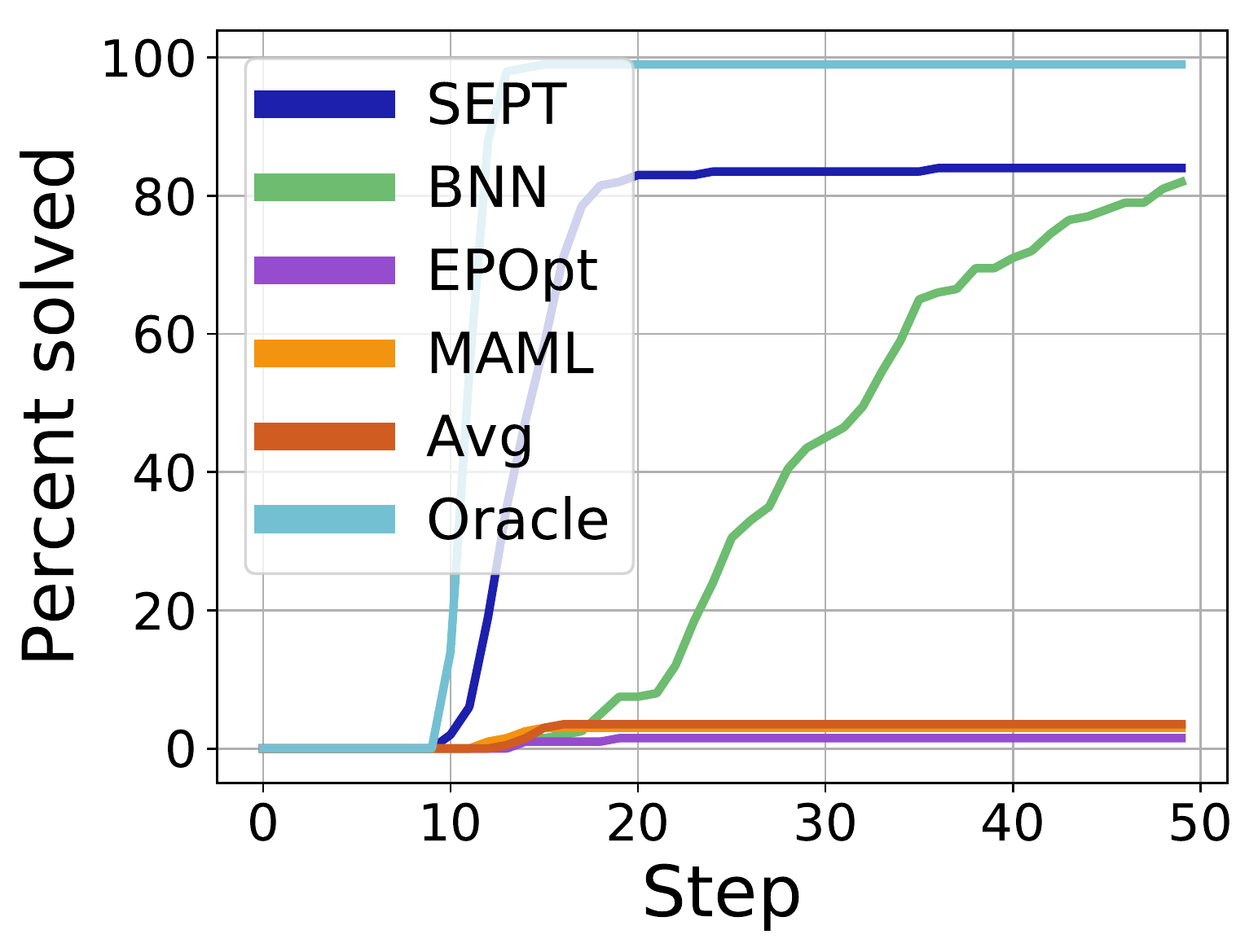}
    \caption{2D navigation}
    \label{fig:2D-percent}
\end{subfigure}
\hfill
\begin{subfigure}[t]{0.49\linewidth}
    \centering
    \includegraphics[width=0.7\linewidth]{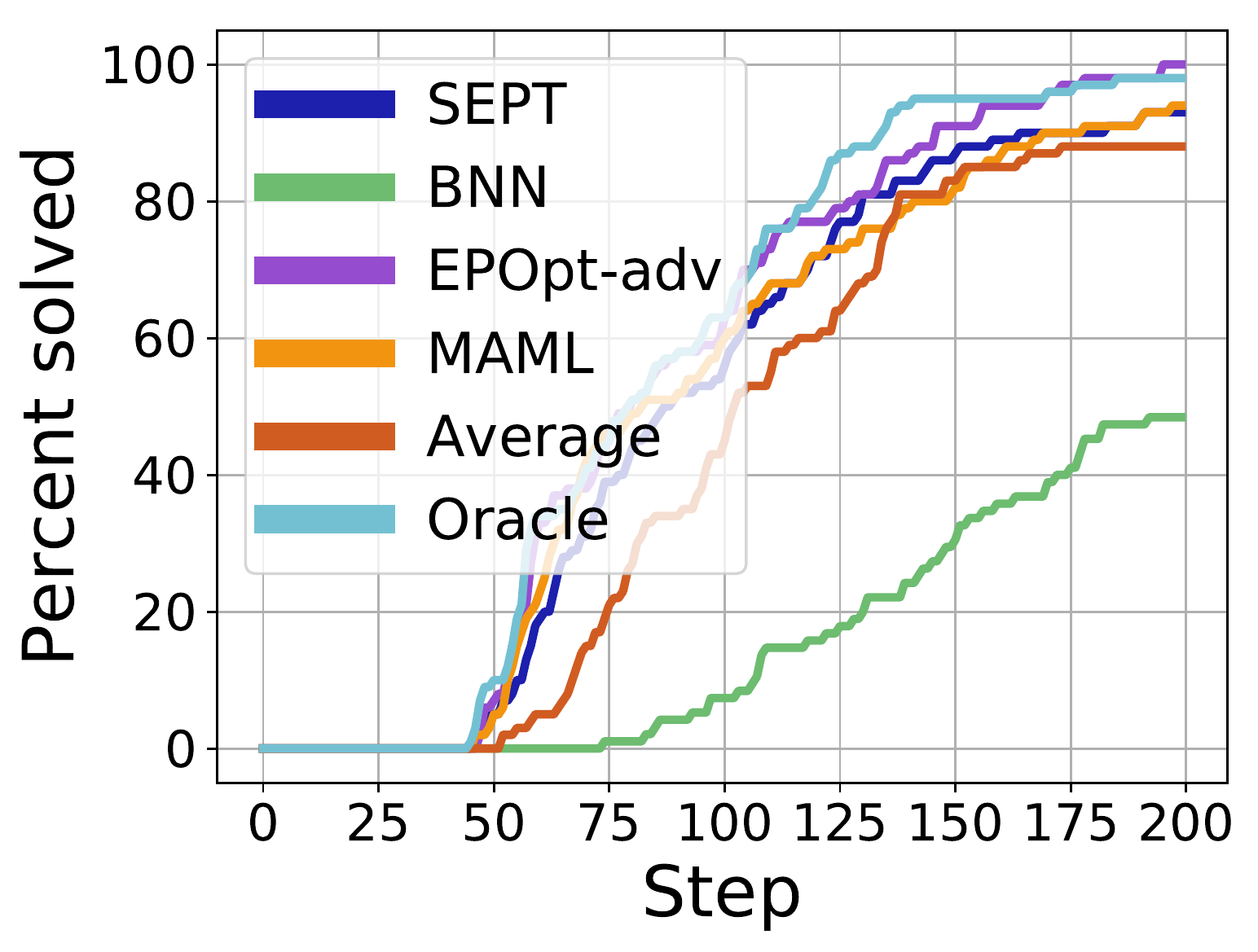}
    \caption{Acrobot}
    \label{fig:acrobot-percent}
\end{subfigure}
\caption{Percent of solved test instances as a function of time steps during the test episode. Percentage is computed among 200 test instances for (a) 2D navigation and (b) 100 test instances for Acrobot.}
\label{fig:percent}
\end{figure}

2D navigation and Acrobot are considered solved upon reaching a terminal reward of 1000 and 10, respectively.
Figure \Cref{fig:percent} shows the percentage of all test episodes that are solved as a function of time steps in the test episode.
Percentage is measured from a total of 200 test episodes for 2D navigation and 100 test episodes for Acrobot.

\subsection{Training curves}

\begin{figure}[H]
\centering
\begin{subfigure}[t]{0.30\linewidth}
    \centering
    \includegraphics[width=1.0\linewidth]{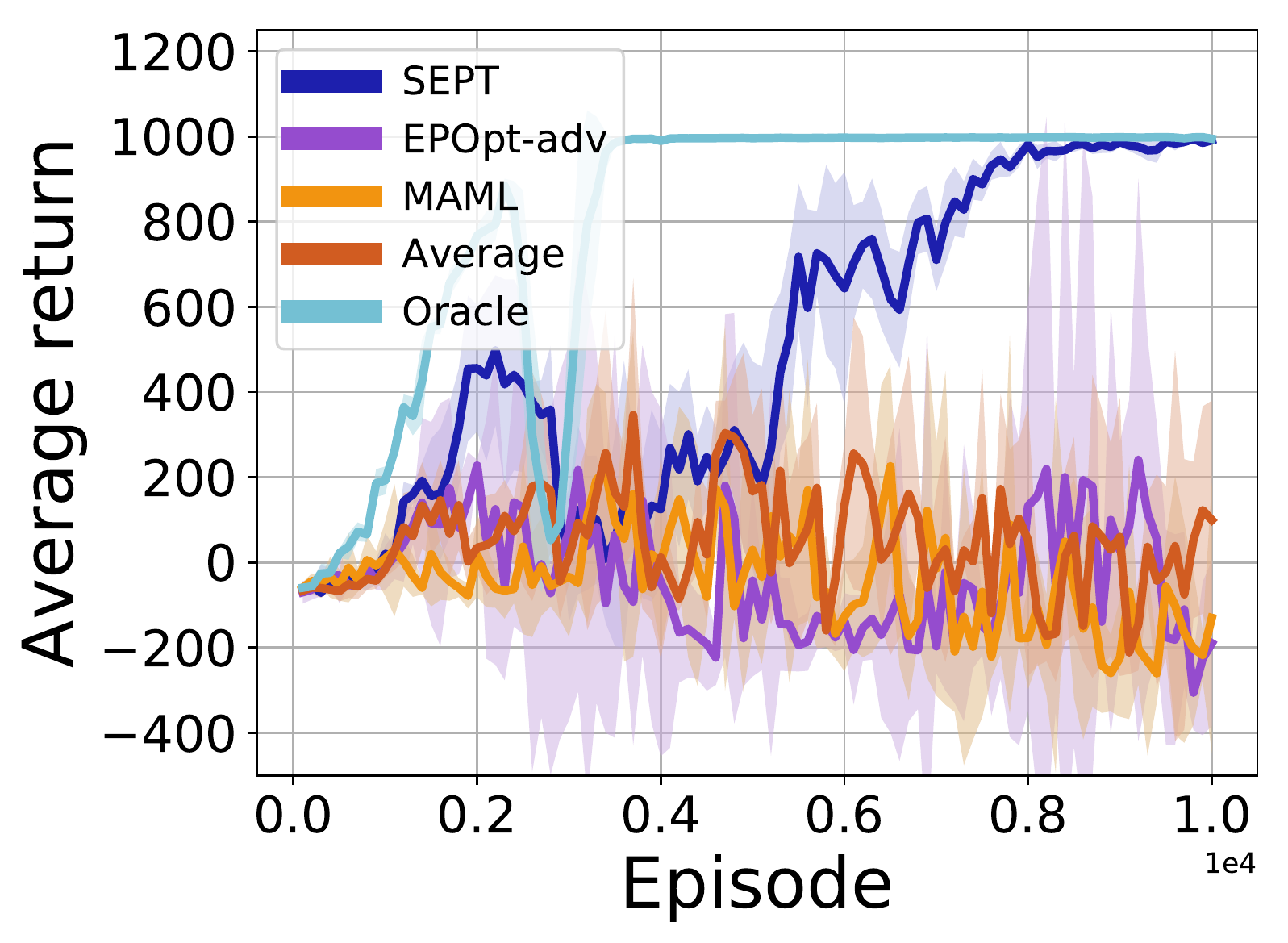}
    \caption{2D navigation}
    \label{fig:2D-training}
\end{subfigure}
\hfill
\begin{subfigure}[t]{0.30\linewidth}
    \centering
    \includegraphics[width=1.0\linewidth]{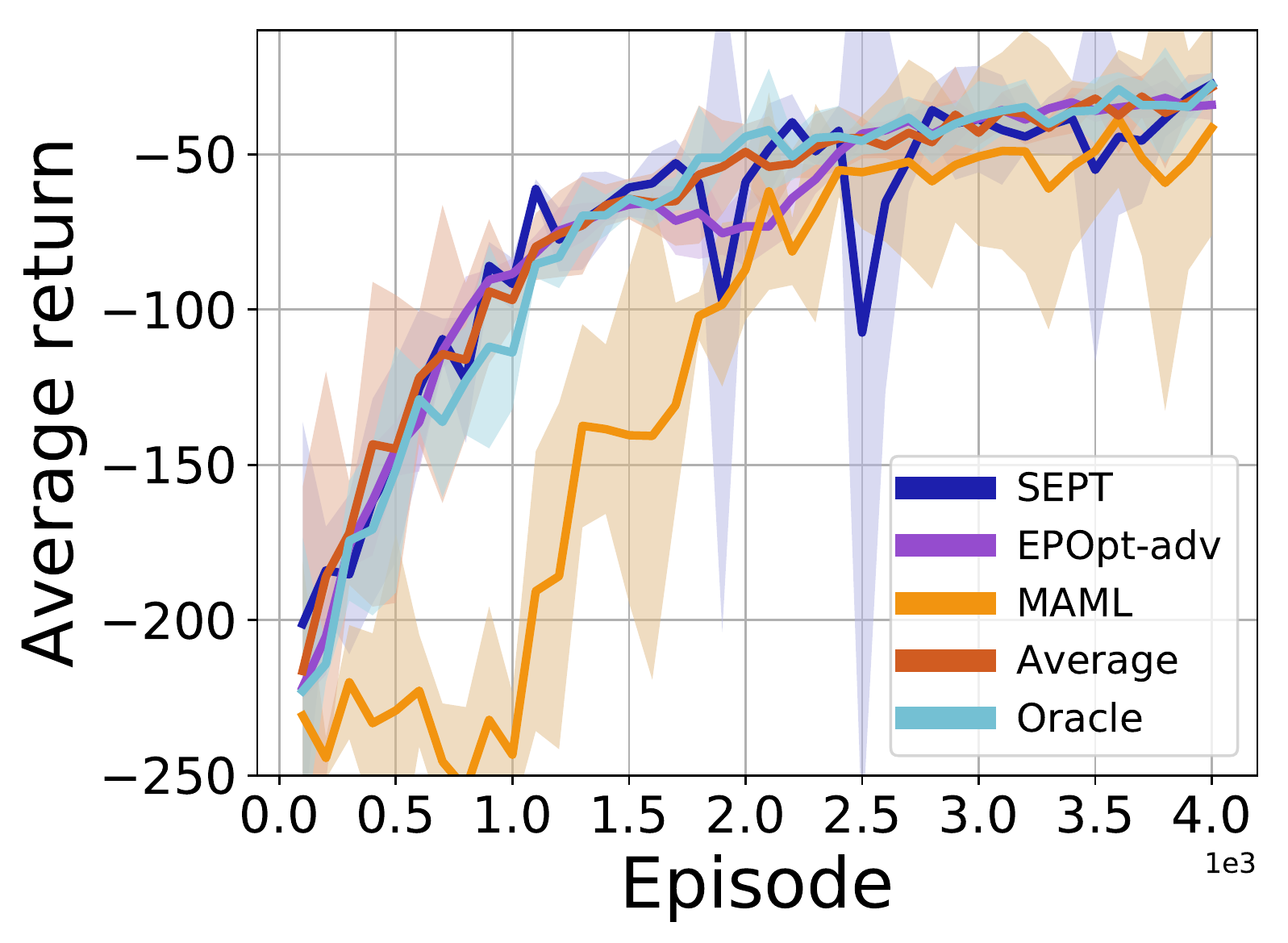}
    \caption{Acrobot}
    \label{fig:acrobot-training}
\end{subfigure}
\hfill
\begin{subfigure}[t]{0.30\linewidth}
    \centering
    \includegraphics[width=0.9\linewidth]{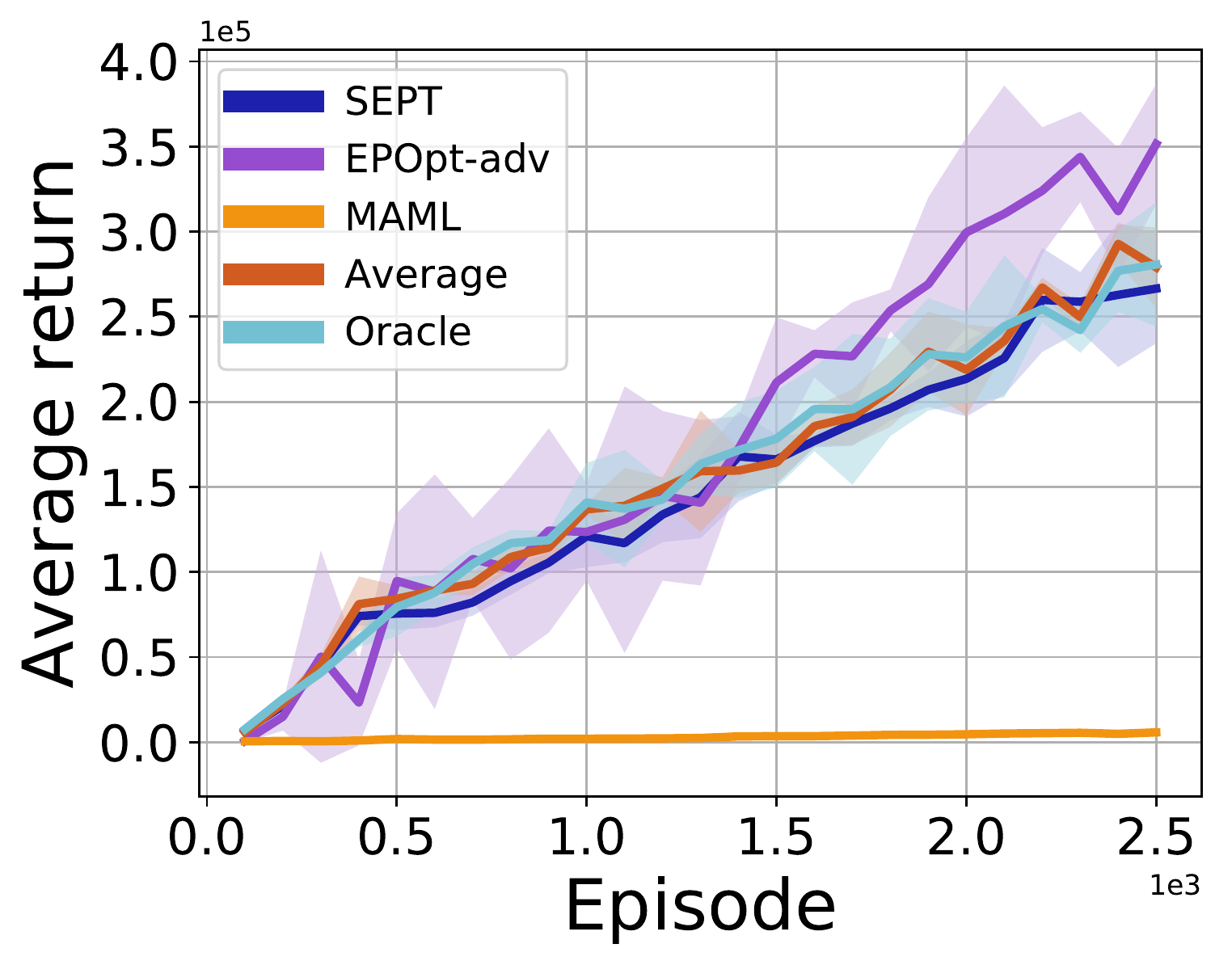}
    \caption{HIV}
    \label{fig:hiv-training}
\end{subfigure}
\caption{Average episodic return over training episodes. Only SEPT and Oracle converged in 2D navigation. All methods converged in Acrobot. All methods except MAML converged in HIV. BNN is not shown as the implementation \citep{killian2017robust} does not record training progress.}
\label{fig:training}
\end{figure}

\Cref{fig:training} shows training curves on all domains by all methods.
None of the baselines, except for Oracle, converge in 2D navigation, because it is meaningless for Avg and EPOpt-adv to interpolate between optimal policies for each instance, and MAML cannot adapt due to lack of informative rewards for almost the entire test episode.
Hence these baselines cannot work for a new unknown test episode, even in principle.
We allowed the same number of training episodes for HIV as in \citet{killian2017robust}, and all baselines except MAML show learning progress.

\subsection{Latent representation of dynamics}
\label{app:latent-dynamics}

\begin{figure}[H]
    \centering
    \includegraphics[width=0.5\linewidth]{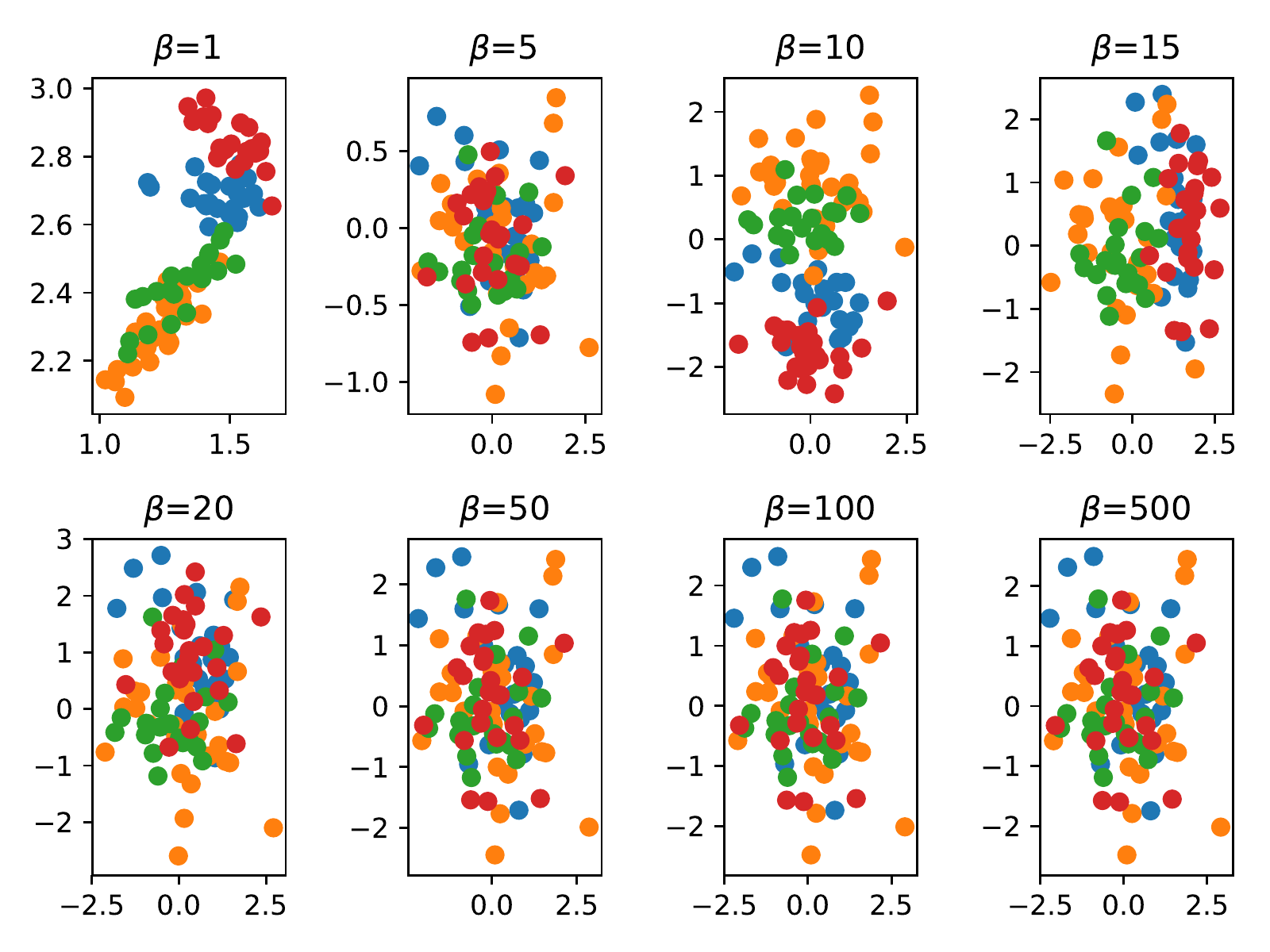}
    \caption{Two-dimensional encodings generated for four instances of Acrobot (represented by four ground-truth colors), for different values of $\beta$. We chose $\beta=1$ for Acrobot.}
    \label{fig:acrobot-latent}
\end{figure}

There is a tradeoff between reconstruction and disentanglement as $\beta$ increases \citep{higgins2017beta}.
Increasing $\beta$ encourages greater similarity between the posterior and an isotropic Gaussian.
\Cref{fig:acrobot-latent} gives evidence that this comes at a cost of lower quality of separation in latent space.

\subsection{Probe reward}
\label{app:probe-reward}

\begin{figure}[H]
    \centering
    \includegraphics[width=0.25\linewidth]{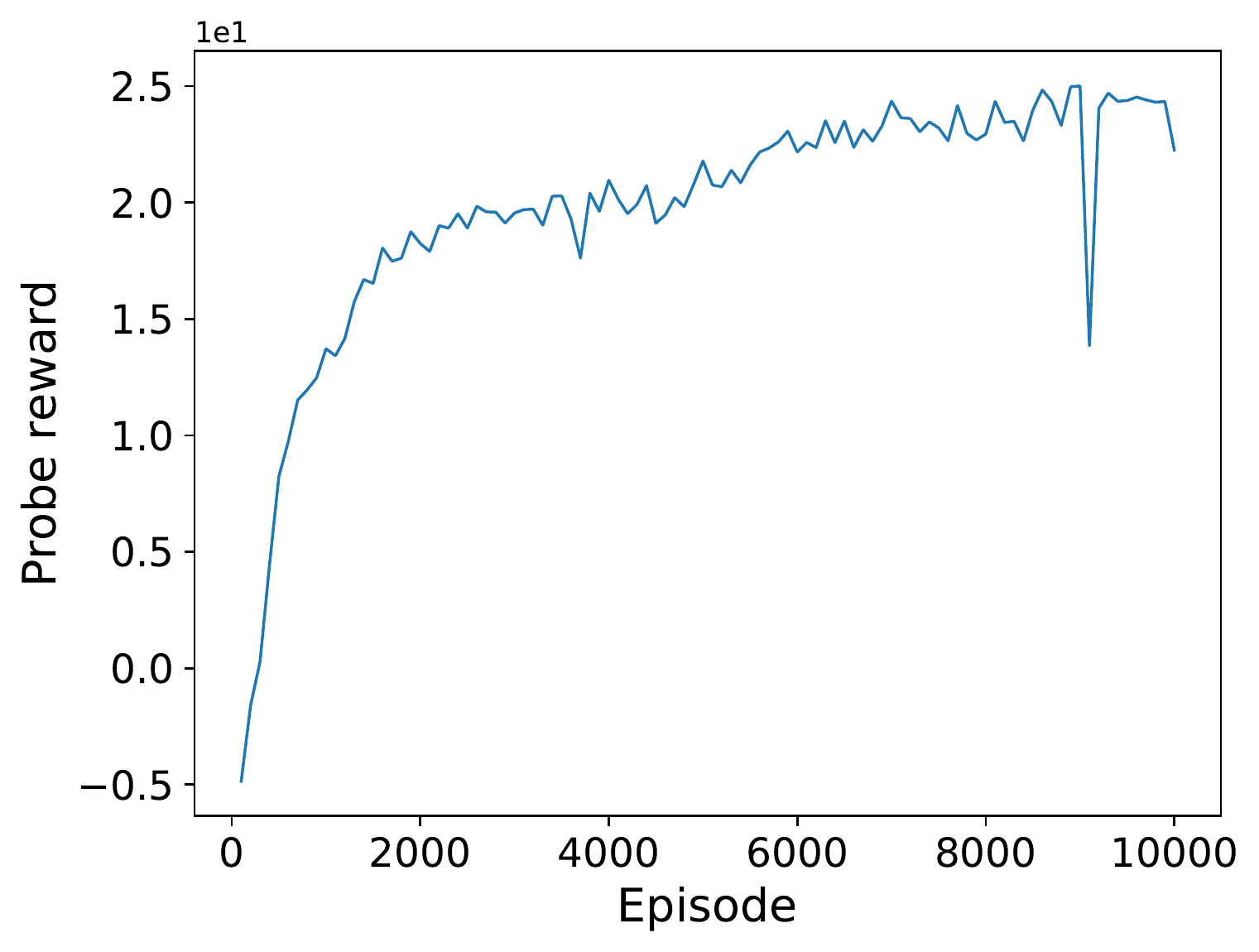}
    \caption{Probe policy reward curve in one training run in 2D navigation}
    \label{fig:2D-probe}
\end{figure}

\section{Experimental details}
\label{app:experiment}


For 2D navigation, Acrobot, and HIV, total number of training episodes allowed for all methods are 10k, 4k, and 2.5k, respectively.
We switch instances once every 10, 8 and 5 episodes, respectively.
There are 2, 8 and 5 unique training instances, and 2, 5, and 5 validation instances, respectively.
For each independent training run, we tested on 10, 5, and 5 test instances, respectively.




\subsection{Algorithm implementation details}
\label{app:algorithms}

The simple baselines Average and Oracle can be immediately deployed in a single test episode after training.
However, the other methods for transfer learning require modification to work in the setting of single episode test, as they were not designed specifically for this highly constrained setting.
We detail the necessary modifications below.
We also describe the ablation SEPT-NP in more detail.

\textbf{BNN.}
In \citet{killian2017robust}, a pre-trained BNN model was fine-tuned using the first test episode and then used to generate fictional episodes for training a policy from scratch.
More episodes on the same test instance were allowed to help improve model accuracy of the BNN.
In the single test episode setting, all fine-tuning and policy training must be conducted within the first test episode.
We fine-tune the pre-trained BNN every 10 steps and allow the same total number of fictional episodes as reported in \citep{killian2017robust} for policy training.
We measured the cumulative reward attained by the policy---while it is undergoing training---during the single real test episode.

\textbf{EPOpt.}
EPOpt trains on the lowest $\epsilon$-percentile rollouts from instances sampled from a source distribution, then adapts the source distribution using observations from the target instance \citep{rajeswaran2016epopt}.
Since we do not allow observation from the test instance, we only implemented the adversarial part of EPOpt.
To run EPOpt with off-policy DDQN, we generated 100 rollouts per iteration and stored the lowest 10-percentile into the replay buffer, then executed the same number of minibatch training steps as the number that a regular DDQN would have done during rollouts.

\textbf{MAML.}
While MAML uses many complete rollouts per gradient step \citep{finn2017model}, the single episode test constraint mandates that it can only use a partial episode for adaptation during test, and hence the same must be done during meta-training.
For both training and test, we allow MAML to take one gradient step for adaptation using a trajectory of the same length as the probe trajectory of SEPT, starting from the initial state of the episode.
We implemented a first-order approximation that computes the meta-gradient at the post-update parameters but omits second derivatives.
This was reported to have nearly equal performance as the full version, due to the use of ReLU activations.

\textbf{SEPT-NP.}
$\pi_{\theta}(a|s,z)$ begins with a zero-vector for $z$ at the start of training.
When it has produced a trajectory $\tau_p$ of length $T_p$, we store $\tau_p$ into $\Dcal$ for training the VAE, and use $\tau_p$ with the VAE to estimate $z$ for the episode.
Later training episodes begin with the rolling mean of all $z$ estimated so far.
For test, we give the final rolling mean of $z$ at the end of training as initial input to $\pi_{\theta}(a|s,z)$.

\subsection{Architecture}
\label{app:architecture}

\textbf{Encoder.}
For all experiments, the encoder $q_{\phi}(z|\tau)$ is a bidirectional LSTM with 300 hidden units and $\tanh$ activation.
Outputs are mean-pooled over time, then fully-connected to two linear output layers of width dim($z$), interpreted as the mean and log-variance of a Gaussian over $z$.

\textbf{Decoder.}
For all experiments, the decoder $p_{\psi}(\tau|z)$ is an LSTM with 256 hidden units and $\tanh$ activation.
Given input $[s_t,a_t,\zhat]$ at LSTM time step $t$, the output is fully-connected to two linear output layers of width $|\Scal| + |\Acal|$, and interpreted as the mean and log-variance of a Gaussian decoder for the next state-action pair $(s_{t+1},a_{t+1})$.

\textbf{Q network.}
For all experiments, the $Q$ function is a fully-connected neural network with two hidden layers of width 256 and 512, ReLU activation, and a linear output layer of size $|\Acal|$.
For SEPT and Oracle, the input is the concatenation $[s_t, z]$, where $z$ is estimated in the case of SEPT and $z$ is the ground truth in for the Oracle.
For all other methods, the input is only the state $s$.

\textbf{Probe policy network.}
For all experiments, $\pi_{\varphi}(a|s)$ is a fully-connected neural network with 3 hidden layers, ReLU activation, 32 nodes in all layers, and a softmax in the output layer.

\subsection{Hyperparameters}

VAE learning rate was 1e-4 for all experiments.
Size of the dataset $\Dcal$ of probe trajectories was limited to 1000, with earliest trajectories discarded.
10 minibatches from $\Dcal$ were used for each VAE training step.
We used $\beta=1$ for the VAE.
Probe policy learning rate was 1e-3 for all experiments.
DDQN minibatch size was 32, one training step was done for every 10 environment steps, $\epsilon_{\text{end}} = 0.15$, learning rate was 1e-3, gradient clip was 2.5, $\gamma = 0.99$, and target network update rate was 5e-3.
Exploration decayed according $\epsilon_{n+1} = c \epsilon_{n}$ every episode, where $c$ satisfies $\epsilon_{\text{end}} = c^N \epsilon_{\text{start}}$ and $N$ is the total number of episodes.
Prioritized replay used the same parameters in \citep{killian2017robust}.

\begin{table}[h]
  \caption{Hyperparameters used by each method, where applicable}
  \label{table:hyperparameters}
  \centering
  \begin{tabular}{lllll}
    \toprule
	 & 2D navigation & Acrobot & HIV \\
    \midrule
    $T_p$ & 2 & 5 & 8 \\
    Instances & 1000 & 500 & 500 \\
    Episodes per instance & 10 & 8 & 5 \\
    VAE batch size & 10 & 64 & 64 \\
    dim($\zhat$) & 2 & 2 & 6 \\
    $\alpha$ & 1.0 & 0.005 & 1.0 \\
    Probe minibatches & 1 & 10 & 1 \\
    DDQN $\epsilon_{\text{start}}$ & 1.0 & 1.0 & 0.3 \\
    \bottomrule
  \end{tabular}
\end{table}

\end{document}